\newtheorem{theorem}{Theorem}
\newtheorem{corollary}{Corollary}
\title{Action Pick-up in Dynamic Action Space Reinforcement Learning}
\author{
Jiaqi Ye$^1$\and
Xiaodong Li$^1$\and
Pangjing Wu$^{2}$\and
Feng Wang$^{3}$
\affiliations
$^1$College of Computer and Information, Hohai University\\
$^2$Department of Computing, The Hong Kong Polytechnic University\\
$^3$School of Computer Science, Wuhan University
\emails
\{jiaqi.ye, xiaodong.li\}@hhu.edu.cn,
pangjing.wu@polyu.edu.hk,
fengwang@whu.edu.cn
}
\begin{document}

\maketitle

\begin{abstract}
    Most reinforcement learning algorithms are based on a key assumption that Markov decision processes (MDPs) are stationary. However, non-stationary MDPs with dynamic action space are omnipresent in real-world scenarios. Yet problems of dynamic action space reinforcement learning have been studied by many previous works, how to choose valuable actions from new and unseen actions to improve learning efficiency remains unaddressed. To tackle this problem, we propose an intelligent Action Pick-up (AP) algorithm to autonomously choose valuable actions that are most likely to boost performance from a set of new actions. In this paper, we first theoretically analyze and find that a prior optimal policy plays an important role in action pick-up by providing useful knowledge and experience. Then, we design two different AP methods: frequency-based global method and state clustering-based local method, based on the prior optimal policy. Finally, we evaluate the AP on two simulated but challenging environments where action spaces vary over time. Experimental results demonstrate that our proposed AP has advantages over baselines in learning efficiency.
\end{abstract}

\section{Introduction}
Reinforcement learning (RL) has made great achievements in solving sequential decision making problems~\cite{vinyals2019grandmaster,silver2016mastering,silver2017mastering}, and most of existing RL algorithms are based on a key assumption that Markov decision processes (MDPs) are stationary. However, many real-world sequential decision making problems are non-stationary, i.e., transition dynamics~\cite{gajane2018sliding,Variational,cheung2020reinforcement,neu2013online}, reward functions~\cite{gajane2018sliding,Variational,cheung2020reinforcement} and the number of available actions (decisions)~\cite{mandel2017add,boutilier2018planning,chandak2020lifelong} often change over time. For example, it is necessary for designers of intelligent robotics to add new operation components to enhance robotics' ability of interacting with unknown environments; in recommender systems, as new products constantly appear in a market, they are added to the system to improve the recommender accuracy. Examples above are essentially \emph{dynamic action space reinforcement learning (DAS-RL)} problems. When training RL models, DAS-RL brings an inevitable problem: prior network structures and parameters in RL can’t adapt to new action space. If an agent continues making decisions based on the old action space instead of adapting to the changes, the policy is likely to be suboptimal. A trivial solution is to retrain the model according to the new action space available. Given that RL algorithms learn optimal policy by trial-and-error exploration, the whole training process consumes a lot of time and expensive computing resources. If the action space changes frequently, the model needs to be retrained constantly, which is inefficient. Therefore, DAS-RL is worth studying.

Existing researches on solving the DAS-RL problems can be divided into two categories. One is to solve the generalization problem of new actions~\cite{jain2020generalization}, wherein the size of the action space is fixed. What changes is the action itself in the action space, i.e., the action in the new action space is unprecedented. In comparison, the other aims to address the problems where the size of the action space varies over time~\cite{mandel2017add,boutilier2018planning,chandak2020lifelong}. A typical work proposed by Chandak et al.~\shortcite{chandak2020lifelong} combined an action representation function and RL algorithms to address the lifelong learning problem, wherein the size of the action space changes throughout its lifetime. Although the DAS-RL problems have been studied to some extent, we often don’t know whether those new actions bring a positive or negative impact on the policy learning. Selecting valuable actions before RL model training may reduce the probability of the agent exploring less useful actions, which helps improve the learning efficiency of RL. The key consideration here is how to choose valuable actions from a set of new actions.

To address this problem, we propose an intelligent Action Pick-up (AP) algorithm, which automatically selects valuable actions that are most likely to facilitate performance from a set of new actions. We first prove that prior optimal policy is the crux of AP, and the states and actions of the policy can provide vital information in the action pick-up process. Then, based on the prior optimal policy, we design two AP methods: frequency-based global method (AP-FG) and state clustering-based local method (AP-SCL) to select valuable actions. Moreover, we theoretically prove that AP-SCL needs less training time than AP-FG. Finally, we demonstrate the effectiveness of the AP by carrying out experiments on two challenging environments.

Our main contributions are as follows: 1) We prove that the prior optimal policy possesses useful knowledge which can help select valuable actions; 2) Leveraging insights from the conclusion, we creatively design two different AP methods, AP-FG and AP-SCL; 3) We prove that the training speed of AP-SCL is faster than that of AP-FG.

%The rest of this paper is organized as follows. Section \ref{sec2} discusses existing research about DAS-RL problems. Section \ref{sec3} describes the details of \emph{Dynamic Action Space MDPs} modeling and introduces how to train a RL model in the setting of dynamic action space. Section \ref{sec4} introduces our AP algorithm. Section \ref{sec5} describes the experimental setup and proves the effectiveness of AP algorithm through two challenging tasks. Section \ref{sec6} presents our conclusions and future work.

\section{Related Works}\label{sec2}
In non-stationary MDPs, state space often changes over time, i.e., new states haven’t seen before are added to the state space as well as old states gradually fade away. Neu~\shortcite{neu2013online} studied the problem of online learning in non-stationary MDPs where reward function was allowed to change over time. Gajane et al. \shortcite{gajane2018sliding,Variational} considered the situation where both transition dynamics and reward functions varied over time. Chandak et al.~\shortcite{chandak2020optimizing} proposed a policy gradient algorithm that directly fitted a good future policy without modeling transition functions, reward functions, or any other underlying non-stationarity in the environment. It can be seen that the non-stationary MDPs with changing transition dynamics and reward functions have been well-studied.

During the whole sequential decision life, changing of action space can also result in non-stationary MDP. New actions taken by the agent results in the emergence of new states in a stochastic fashion, then the MDPs becomes non-stationary. Many researchers have put forward novel methods to solve DAS-RL problems. Jain et al.~\shortcite{jain2020generalization} introduced zero-shot learning to generalize previously unseen actions, without need to learn from scratch after new actions were available. This work has been limited to the setting where the size of new action space needed to be consistent with the old one, i.e., the size of action space was fixed, and what changes was the action in it. By contrast, following works~\cite{boutilier2018planning,chandak2020lifelong} focused on the problem where the size of the action space changed over time. Boutilier et al.~\shortcite{boutilier2018planning} proposed a stochastic action set MDP (SAS-MDP), which laid a foundation for DAS-RL. In the SAS-MDP, the action space available at each specific state $s$ was a stochastically chosen subset from a fixed, finite number of base actions. In a long episode, there was a possibility that the agent could observe the complete base actions due to the random sampling. Unlike this work, Chandak et al.~\shortcite{chandak2020lifelong} focused on lifelong MDP where new actions were unseen before and the agent could never observe all possible actions. Mandel et al.~\shortcite{mandel2017add} proposed an automated method ELI to intelligently identify states where new actions were most likely to improve performance. This work had some connection with the dynamic action space, but their primary goal was to find the optimal state where a new action should be added.

Chandak et al.~\shortcite{chandak2020lifelong} presented the work most relevant to ours, whose approach focused on the design of an algorithm that could continually adapt to the new actions in DAS-RL, which avoided training from scratch repeatedly. However, in their study, all new actions were added to the old action space. Adding less useful new actions to the action space might increase exploration time and slow down the training speed to a certain extent. Inspired by this practical problem, we propose an action pick-up algorithm to improve the RL training efficiency. Our work also gets inspiration from~\cite{rafati2019learning}. In their work, all states in the discrete state space were clustered and each cluster was given a subgoal, so as to solve the RL problem with sparse rewards. Instead, we focus only on states in the optimal policy, and then cluster these states.

\section{Problem Formulation}\label{sec3}
In this section, we first formulate our non-stationary dynamic action space MDPs, and then discuss the approach to training a RL model in dynamic action space, which is the premise of our AP algorithm.

\subsection{Dynamic Action Space MDPs}
Our formulation of non-stationary \emph{Dynamic Action Space MDPs (DAS-MDPs)} derives from a standard, finite MDP~\cite{sutton2018reinforcement} $\mathcal{M}$ that can be described as a tuple of five elements $<S,A,P,R,\gamma>$. $S$ is a set of all possible states that can be observed by agents in the environment. $A$ consists of a set of fixed, finite discrete actions, called the action space. State transition probability $P:S\times A\rightarrow S$ represents the probability of transition to $s'\in S$ when action $a$ is taken at state $s\in S$ . Reward $R(s,a)$ quantifies the feedback given by the environment after taking action $a$ at state $s$. $\gamma$ is a discount factor that balances instant and future reward. The agent's goal is to maximize the expected discount reward $G(s)=\mathbb{E}[\sum_{t=0}^{T}\gamma^{t}R(s)]$ in a finite horizon.

We define the DAS-MDPs as MDPs $\boldsymbol{\mathcal{M}}=\{\mathcal{M}_k\}_{k=1}^{\infty}$, where each MDP $\mathcal{M}_k$ denotes a standard, finite, stationary MDP in phase $k$. $\mathcal{M}_k=<S,A_k,P,R,\gamma >$ represents state space, action space, state transition probability, reward function and discount factor in $\mathcal{M}_k$, respectively. To simplify the modeling, we assume that $S,P,R,\gamma$ in $\boldsymbol{\mathcal{M}}$ remain the same except the action space $A$. When a set of new actions $\mathcal{A}$ is available, $\mathcal{M}_k$ immediately transitions to $\mathcal{M}_{k+1}$ in which action space $A_{k+1}=A_k \cup \mathcal{A}$. If $\mathcal{A}=\emptyset$, $\mathcal{M}_{k+1}=\mathcal{M}_k$, otherwise $\mathcal{M}_{k+1} \neq \mathcal{M}_k$. The objective of the agent is to quickly adapt to the new action space $A_{k+1}$, and maximize the expected discount reward in $\mathcal{M}_{k+1}$.

\subsection{Learning in Dynamic Action Space}\label{sec:3.2}
For RL with discrete action space, the output is the probability or $Q$-value of each action, where the number of parameters that the output layer has is dependent on the size of action space. Once additional actions are added to the action space, the number of parameters must be modified to adapt to the change, which means we need to retrain the model. \emph{How can we avoid resources wasting caused by retraining RL models in the setting of discrete action space?} To tackle this problem, we get inspiration from RL paradigms with continuous action space~\cite{sutton2018reinforcement} and action representation methods~\cite{chandak2019learning,dulac2015deep}. In RL with continuous action space, the model outputs $e\in \delta $ corresponding to a $d$-dimensional vector, according to the policy $\pi_{\theta}:S\times \delta \rightarrow [0,1]$, where $\delta \in \mathbb{R}^d$. As long as the dimension $d$ is determined, the number of parameters is fixed. After that, the action representation function maps a continuous action taken by RL to a discrete action.

In $\mathcal{M}_k$, RL model outputs low-dimensional continuous action $e_k \in \delta_k$, where $\delta_k \in \mathbb{R}^d$, according to policy $\pi_{\theta}^k$. It is noteworthy that dimension $d$ is fixed during $\boldsymbol{\mathcal{M}}$. Then, action representation function~\cite{chandak2019learning} $f_k:\delta_k \rightarrow A_k$, maps $e_k$ to discrete action $a_k \in A_k$, i.e., $f_k(e_k)=a_k$. The agent takes action $a_k$ to interact with the environment, and continuously trains policy $\pi_{\theta}^k$ until it converges to the optimal policy $\pi_{\theta^*}^k$. After transitioning to $\mathcal{M}_{k+1}$, the optimal policy $\pi_{\theta^*}^k$ in $\mathcal{M}_k$ is carried over, i.e., $\pi_{\theta}^{k+1}=\pi_{\theta^*}^k$. The prime benefit of doing so is to avoid retraining the model and reuse prior knowledge. In addition, we only need to modify the parameters of $f_k$ and then train the new function $f_{k+1}$ via history experience trajectory. By doing so, the agent can adapt to the new action space efficiently.%The training of the action representation function is a supervised learning procedure that requires fewer samples and is more efficient than RL training.

\section{Intelligent Action Pick-up}\label{sec4}
If a set of valuable actions $\mathcal{A}^*(\mathcal{A}^* \subset \mathcal{A})$ can be accurately selected from new actions $\mathcal{A}$, the RL learning efficiency will be further improved. This raises a natural question: \emph{how to select valuable actions $\mathcal{A}^*$?} To answer this question, we propose an action pick-up algorithm to intelligently obtain $\mathcal{A}^*$ by solving the objective function defined in Section \ref{sec:4.1}. Details of the action pick-up algorithm are available in Appendix A.1.
\begin{figure}[htb]
    \centering
    \includegraphics[width=0.9\linewidth,height=0.6\linewidth]{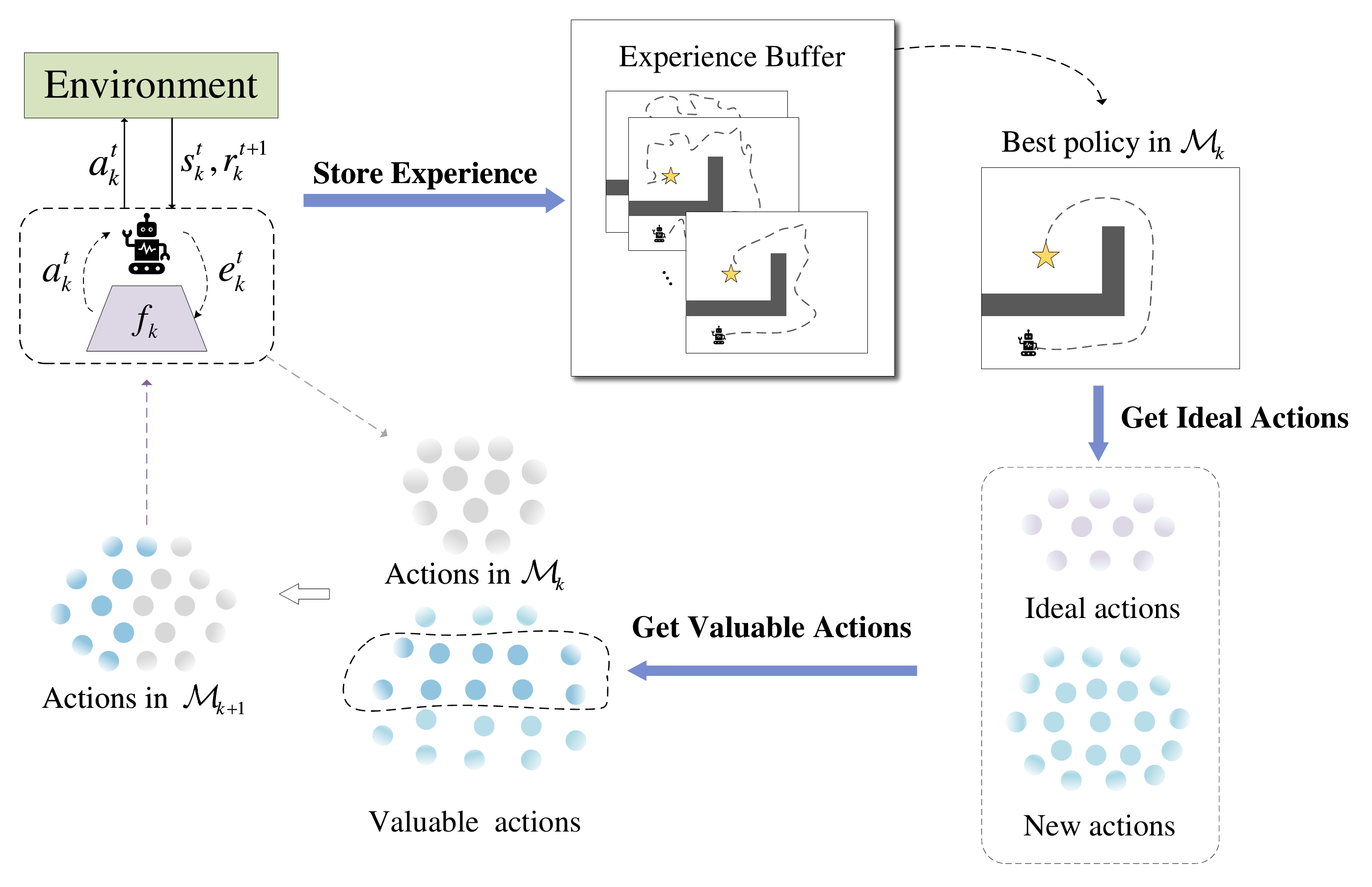}
    \caption{Action pick-up framework for selecting valuable actions. When MDP $M_k$ transitions to $M_{k+1}$, the action space $A_{k+1}$ in $M_{k+1}$ differs from $A_{k}$, and $A_{k+1}$ is derived from $A_{k}$, i.e., $A_{k+1}=A_k \cup \mathcal{A}^*$.} 
    \label{fig:AP}
\end{figure}

\subsection{Objective Function}\label{sec:4.1}
For $a \in \mathcal{A}$, its value $V(a)$ can be defined as follows:
\begin{equation}
    V(a):=\mathop{\rm max}\limits_{\tilde{a} \in \mathcal{\tilde{A}}}\ sim(a,\tilde{a}),
\end{equation}
where $\Tilde{a}$ denotes the action in the ideal action set $\Tilde{\mathcal{A}}$, and $sim$ represents a similarity measure function. The greater the value of $sim(a,\Tilde{a})$ is, the more similar $a$ and $\Tilde{a}$ is. 
Our goal is to select the action with the highest $V(a)$, so the objective function can be defined as:
\begin{align}
    a=&\mathop{\rm argmax}\limits_{a \in \mathcal{A}}\ V(a)\nonumber\\
    =&\mathop{\rm argmax}\limits_{a \in \mathcal{A}}\ \mathop{\rm max}\limits_{\Tilde{a} \in \Tilde{\mathcal{A}}}\ sim(a,\Tilde{a}).
    \end{align}
We can select a set of valuable actions $\mathcal{A}^*$, by calculating the value of each action $a \in \mathcal{A}$. 

The ideal action set $\Tilde{\mathcal{A}}$ needs to be obtained before solving the objective function. We believe that the optimal policy $\pi_{\theta^*}^{k}$ in $\mathcal{M}_k$ has useful prior knowledge, and it is convincing to obtain the $\Tilde{a}$ according to the state and action information in the $\pi_{\theta^*}^{k}$. Here, we give a generalized definition of ideal action $\Tilde{a}$,
\begin{equation}\label{ideal-actions}
    \Tilde{a}:=\phi((\hat{a}_i,\hat{a}_j),(w(\hat{a}_i),w(\hat{a}_j))),
\end{equation}
where the input of $\phi$ is an action pair $(\hat{a}_i,\hat{a}_j)$ and their corresponding weight $w((\hat{a}_i),w(\hat{a}_j))$. $\hat{a}_i$ and $\hat{a}_j$ are taken from $\hat{A}$ consisting of all actions in the optimal policy. Specific theoretical derivation that proves the rationality of the design of $\Tilde{a}$ is in Section \ref{sec4.2}.

\subsection{Theoretical Analysis}\label{sec4.2}
Let $\boldsymbol \pi$ denotes a set of all policies learned by the agent in $\mathcal{M}_k$, and $\boldsymbol \pi^*$  denotes the optimal policies in $\mathcal{M}_k$. Since there may be more than one optimal policy in $\mathcal{M}_k$, the size of $\boldsymbol \pi^*$ is greater than or equal to $1$, i.e., $|\boldsymbol{\pi}^*| \ge 1$.
\begin{theorem}\label{the1}
    Actions on trajectory $\tau=\{s_0,a_0,r_1,s_1,a_1,r_2,\\s_2,a_2,r_3,\cdots,s_{T^*-1},a_{T^{*}-1},r_{T^*}\}$ generated by optimal policy $\pi^* \ (\pi^*\in \boldsymbol{\pi}^*)$ have the highest value than those on trajectories generated by $\pi\ (\pi \in \boldsymbol{\pi-\pi}^*)$.
\end{theorem}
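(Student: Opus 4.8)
The plan is to read the ``value'' of an action $a$ taken at a state $s$ as its optimal action value $Q^{\pi^*}(s,a)$ (written $Q^*(s,a)$ below), and to show that every action appearing on a trajectory $\tau$ generated by an optimal policy $\pi^*$ attains the state-wise maximum of this quantity, while any policy $\pi\in\boldsymbol{\pi}-\boldsymbol{\pi}^*$ is forced, at some reachable state, to pick an action of strictly smaller value. First I would recall the two standard facts about a finite MDP that we may invoke: (i) the optimal value functions obey the Bellman optimality equation $V^*(s)=\max_{a\in A_k}Q^*(s,a)$; and (ii) a stationary policy $\pi$ is optimal if and only if it is greedy with respect to $Q^*$, i.e.\ $\mathrm{supp}\,\pi(\cdot\mid s)\subseteq\arg\max_{a}Q^*(s,a)$ at every state $s$ reachable under $\pi$.

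Next I would prove the ``upper'' half of the claim. Let $\tau=\{s_0,a_0,r_1,\dots,s_{T^*-1},a_{T^*-1},r_{T^*}\}$ be any realization of $\pi^*$. By (ii), each $a_t\in\arg\max_{a}Q^*(s_t,a)$, so $Q^*(s_t,a_t)=V^*(s_t)\ge Q^*(s_t,a)$ for every admissible action $a$ at $s_t$. Hence at every state it visits, $\pi^*$ selects an action whose value is at least that of any action another policy could select there; this is the precise sense in which the actions on $\tau$ carry ``the highest value.''

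Then I would treat the strict part for non-optimal policies. Fix $\pi\in\boldsymbol{\pi}-\boldsymbol{\pi}^*$. Because $\pi$ is not optimal, $V^\pi(s)<V^*(s)$ for at least one reachable state $s$; comparing the Bellman expectation equation for $V^\pi$ with the optimality equation for $V^*$ (a one-step policy-improvement argument) yields a reachable state $\bar s$ and an action $\bar a$ with $\pi(\bar a\mid\bar s)>0$ such that $Q^*(\bar s,\bar a)<V^*(\bar s)=\max_{a}Q^*(\bar s,a)$. Consequently $\pi$-almost every trajectory it generates contains an action strictly below the optimal value level, so the actions produced by $\pi$ cannot dominate those on $\tau$, and — summing the per-state inequalities over the horizon — the aggregate (expected-return) value of $\pi$'s actions is strictly smaller. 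This also justifies taking $\hat A$, the set of actions occurring in $\pi^*$, as the reference pool from which the ideal actions $\tilde{\mathcal A}$ are built.

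The main obstacle I expect is not a hard calculation but pinning the statement down: ``actions on $\tau$ have higher value than those on trajectories generated by $\pi$'' is ambiguous because two policies need not visit the same states, so an action-by-action comparison is ill-posed. I would resolve this by (a) scoring every action with the common yardstick $Q^*$, independent of which trajectory it lies on, and (b) stating the conclusion state-wise (the optimal policy attains $\max_aQ^*(s,a)$ at each visited state, a non-optimal policy falls strictly below it somewhere) together with its aggregated return form. A secondary care point is stochastic policies: the suboptimality should be asserted for $\pi$-almost every generated trajectory, or, equivalently, after restricting to deterministic representatives of each policy class.
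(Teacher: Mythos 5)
Your proposal is correct, but it formalizes and proves the theorem by a genuinely different route than the paper. The paper's proof fixes the evaluating policy as the yardstick: it derives the identity $q_\pi(s,a)=\mathbb{E}[R_{t+1}+\gamma v_\pi(s')\mid S_t=s,A_t=a]$, asserts that $v_{\pi^*}(s)>v_\pi(s)$ for every state when $\pi^*$ is better than $\pi$, and concludes $q_{\pi^*}(s,a)>q_\pi(s,a)$ at every state; that is, it reads ``the actions on $\tau$ have the highest value'' as a comparison of the same state--action pair under the two policies' own $q$-functions. You instead score every action by the single common yardstick $Q^{*}$, invoke the greedy characterization of optimal policies to show each $a_t$ on $\tau$ attains $\max_a Q^*(s_t,a)=V^*(s_t)$, and use a one-step policy-improvement argument to show any $\pi\in\boldsymbol{\pi}-\boldsymbol{\pi}^*$ must, at some reachable state, place mass on an action with $Q^*(\bar s,\bar a)<V^*(\bar s)$. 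Your reading buys more: it resolves the ambiguity you correctly flag (actions on different trajectories visit different states, so an action-by-action comparison needs a common scale), it yields a per-action statement that directly legitimizes building $\hat A$ and $\tilde{\mathcal A}$ from the optimal trajectory, and it gets strictness from non-optimality rather than assuming it. The paper's argument buys brevity—only the Bellman expectation equation is needed—but it leans on the unproved strict inequality $v_{\pi^*}(s)>v_\pi(s)$ at \emph{all} states (in general only $v_{\pi^*}(s)\ge v_\pi(s)$ holds everywhere, with strictness at some state), and it compares values of the same action under two value functions rather than comparing the actions the two policies actually take. One small caution on your strict half: if optimality is defined over all states rather than return from $s_0$, a policy could be globally non-optimal yet greedy at every state it can reach from $s_0$, so you should (as you implicitly do) define $\boldsymbol{\pi}^*$ via the start-state value or restrict all statements to states reachable under the policies considered.
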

\begin{proof}
    The relationship between state-action value function $q(s,a)$ and state value function $V(s)$ can be represented by $q_\pi(s,a)=\mathbb{E}[R_{t+1}+\gamma v_\pi(s')|S_t=s,A_t=a]$. See Eq. (\ref{q-value}) for detailed derivation. 
    \begin{align}\label{q-value}
     q_\pi(s,a)= & \sum_{s'}\sum_{r}p(s',r|s,a)(r+\gamma v_\pi(s'))\nonumber\\
     = & \resizebox{.8\linewidth}{!}{$\displaystyle \sum_{s'}\sum_{r}rp(s',r|s,a)+\sum_{s'}\sum_{r}p(s',r|s,a)\gamma v_\pi(s')\nonumber$}\\
     =&\resizebox{.8\linewidth}{!}{$\displaystyle  \mathbb{E}[R_{t+1}|S_t=s,A_t=a]+\gamma \mathbb{E}[v_\pi(s')|S_t=s,A_t=a]\nonumber$}\\
     = & \mathbb{E}[R_{t+1}+\gamma v_\pi(s')|S_t=s,A_t=a].
    \end{align}
     In the situation where $\pi^*$ is better than $\pi$, the value function of $\pi^*$ is always better than that of $\pi$, i.e., $v_{\pi^*}(s)>v_\pi(s)$ for any $s \in S$. According the following conditions,
    \begin{equation}
        \left\{\begin{array}{l}
             v_{\pi^*}(s)>v_\pi(s)\\
             q_\pi(s,a)=\mathbb{E}[R_{t+1}+\gamma v_\pi(s')|S_t=s,A_t=a]\\
            q_{\pi^{*}}(s,a)=\mathbb{E}[R_{t+1}+\gamma v_{\pi^*}(s')|S_t=s,A_t=a]
        \end{array}\right.,
    \end{equation}
we can draw a conclusion that $q_{\pi^*}(s,a) > q_\pi(s,a)$ at any state $s$.
\end{proof}

Theorem \ref{the1} presents an important conclusion that the $q_{\pi^{*}}(s,a)$ in the optimal policy is the highest, so it is reasonable to obtain $\Tilde{\mathcal{A}}$ by using the state and action information provided by the optimal policy. In $\mathcal{M}_k$, we obtain lots of policies. \emph{How to efficiently obtain the optimal policy among those policies?} The following theorem and corollary provide us a theoretical basis for obtaining the optimal policy.%The following theorem and corollary illustrate the approach to obtaining the optimal policy.

\begin{theorem}\label{the2}
    In a series of tasks $\mathcal{T}$ where an agent aims at reaching the goal $\mathcal{G}$ from initial state $s_0$ as fast as possible, and $T$ denotes total steps the agent needs in an episode. If policy $\pi_i$ is better than $\pi_j$, then the total steps $T_i$ the agent needs in $\pi_i$ is fewer than $T_j$, i.e., if $v_{\pi_i}(s_0)>v_{\pi_j}(s_0)$, then $T_i<T_j$.
\end{theorem}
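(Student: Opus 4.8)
The plan is to pin down the reward convention that is implicit in an ``as fast as possible'' goal-reaching task, and then observe that under any such convention $v_\pi(s_0)$ is a \emph{strictly monotone} function of the number of steps $T$ the agent needs to reach $\mathcal{G}$ from $s_0$. Once that monotonicity is established, the strict inequality $v_{\pi_i}(s_0) > v_{\pi_j}(s_0)$ transfers directly to $T_i < T_j$, which is exactly the claim.

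First I would state as an assumption which of the two standard goal-reaching reward models is in force: (i) a unit cost per step, $R(s,a) = -1$ on every non-terminal transition and $0$ at $\mathcal{G}$; or (ii) a single terminal reward $R = +1$ on reaching $\mathcal{G}$ and $0$ elsewhere with discount $\gamma \in (0,1)$. Second, I would evaluate $v_\pi(s_0)$ along the trajectory that $\pi$ generates from $s_0$ until it first hits $\mathcal{G}$ after $T$ steps. Under model (i) with $\gamma = 1$ this gives $v_\pi(s_0) = -T$; under model (i) with $\gamma < 1$ it gives $v_\pi(s_0) = -\tfrac{1-\gamma^{T}}{1-\gamma}$; under model (ii) it gives $v_\pi(s_0) = \gamma^{\,T-1}$. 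In each case the map $T \mapsto v_\pi(s_0)$ is strictly decreasing on the positive integers, since $-T$ is trivially decreasing, $\gamma^{T}$ is strictly decreasing for $\gamma \in (0,1)$ (so $-\tfrac{1-\gamma^{T}}{1-\gamma}$ is too), and $\gamma^{\,T-1}$ is strictly decreasing for $\gamma \in (0,1)$. Third, I would invert: because $v_{\pi_i}(s_0) > v_{\pi_j}(s_0)$ and $T \mapsto v_\pi(s_0)$ is strictly decreasing, we must have $T_i < T_j$, which completes the deterministic case.

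\textbf{Main obstacle.} The delicate point is the stochastic case, where transitions (or stochastic policies) make $T$ a random variable and $v_\pi(s_0)$ becomes $\mathbb{E}[-T]$ under model (i) with $\gamma=1$, or $\mathbb{E}[\gamma^{T}]$ under model (ii). The additive case is unaffected: $\mathbb{E}[-T_i] > \mathbb{E}[-T_j]$ immediately yields $\mathbb{E}[T_i] < \mathbb{E}[T_j]$, i.e. the claim holds with $T$ read as expected hitting time. The discounted/terminal-reward case is subtler: $\mathbb{E}[\gamma^{T_i}] > \mathbb{E}[\gamma^{T_j}]$ does not in general force $\mathbb{E}[T_i] < \mathbb{E}[T_j]$, because $x \mapsto \gamma^{x}$ is convex and Jensen's inequality can point the wrong way. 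To keep the statement rigorous I would either (a) restrict attention to deterministic transitions and deterministic optimal policies --- which is the natural setting for the navigation/maze-type environments used later in the paper, so that $T$ is a well-defined integer --- or (b) interpret ``total steps'' as the discount-weighted surrogate itself and state the theorem for that quantity. I expect this stochastic subtlety (choosing and justifying option (a) or (b)) to be the only non-routine part; once the reward model and the deterministic assumption are fixed, the remainder is a one-line monotonicity argument.
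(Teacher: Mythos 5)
Your proposal is correct and follows essentially the same route as the paper: write $v_\pi(s_0)$ as a strictly decreasing function of the hitting time $T$ under a goal-reaching reward convention (the paper uses the combined model $G(T)=R^{(1)}\frac{1-\gamma^{T}}{1-\gamma}+\gamma^{T}R^{(2)}$ with step penalty $R^{(1)}\le 0$ and terminal reward $R^{(2)}$, establishing monotonicity via $\frac{\mathrm{d}G}{\mathrm{d}T}<0$), and then invert the inequality $v_{\pi_i}(s_0)>v_{\pi_j}(s_0)$ to get $T_i<T_j$. The stochastic/Jensen subtlety you flag is the one real gap in the statement, and the paper resolves it only by assuming the expectation equals a single first-visit Monte Carlo sample, i.e., $\mathbb{E}_\pi[G(T)\mid s_0]=G(T)$, which is in effect your deterministic restriction (a).
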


\begin{proof}
For any state $s \in S$, its state value function is
    \begin{equation}
    \resizebox{.89\linewidth}{!}{$
            \displaystyle
           V_{\pi}(s) = \mathbb{E}_\pi [G_t|S_t=s] = \mathbb{E}_\pi [\sum_{k=0}^{T}\gamma^{k}R_{t+k+1}|S_t=s]
        $},
    \end{equation}%
then the state value function at the initial state $s_0$  can be represented by:
    \begin{equation}
        V_{\pi}(s_0) = \mathbb{E}_\pi[\sum_{k=0}^{T}\gamma^{k}R_{k+1}|s_0].
    \end{equation}
    In tasks $\mathcal{T}$, in order to encourage the agent to reach the goal $\mathcal{G}$ as quickly as possible, it is assumed that punishment $R^{(1)} (R^{(1)} \le 0, \ R^{(1)} \mbox{\ is not mandatory})$ is given after each step. When the agent reaches $\mathcal{G}$, it gets a reward $R^{(2)}$. The reward function is defined as
    \begin{equation}
        R=\left\{
        \begin{aligned} 
          R^{(1)}, & \mbox{\quad the penalty given at each time step}\\
          R^{(2)}, & \mbox{\quad the reward for reaching the goal}
        \end{aligned}
        \right.. 
    \end{equation}
    Let $G(T)=\sum_{k=0}^{T-1}\gamma^{k}R^{(1)}+\gamma^{T}R^{(2)}$. Without loss of generality, $\gamma$, $R^{(1)}$ and $R^{(2)}$ are constants.

    \begin{align}
     G(T) = &\sum_{k=0}^{T-1}\gamma^{k}R^{(1)}+\gamma^{T}R^{(2)}\nonumber\\
     = & R^{(1)}(\gamma^0+\gamma^1+\cdots+\gamma^{T-1})+\gamma^TR^{(2)}\nonumber\\
     = & R^{(1)}\frac{1-\gamma^{T}}{1-\gamma}+\gamma^TR^{(2)}.
    \end{align}
    Take derivative of $G(T)$ with respect to $T$,
    \begin{align}
     \frac{\mathrm{d} G(T)}{\mathrm{d} T} = & R^{(1)}\frac{-\gamma^Tln\gamma}{1-\gamma}+R^{(2)}\gamma^Tln\gamma \nonumber\\
     = & \gamma^Tln\gamma(\frac{R^{(1)}}{\gamma-1}+R^{(2)}).
    \end{align}
    Since $\gamma \in (0,1)$, we have $\gamma^Tln\gamma<0$, hence the monotonicity of $G(T)$ is decided by polynomial $\frac{R^{(1)}}{\gamma-1}+R^{(2)}$.
    \begin{itemize}
        \item When $\frac{R^{(1)}}{\gamma-1}+R^{(2)}>0$, i.e., $R^{(1)}<(1-\gamma)R^{(2)}$, $G^{'}(T)<0$, then $G(T)$ is monotonically decreasing.
        \item When $\frac{R^{(1)}}{\gamma-1}+R^{(2)}<0$, i.e., $R^{(1)}>(1-\gamma)R^{(2)}$, $G^{'}(T)>0$, then $G(T)$ is monotonically increasing.
    \end{itemize}
    In tasks $\mathcal{T}$, according to the following conditions
        \begin{equation}
        R=\left\{
        \begin{array}{l}
          0<(1-\gamma)R^{(2)}<R^{(2)}\\
          R^{(1)} \le 0
        \end{array}
        \right.,
    \end{equation}
    $R^{(1)}<(1-\gamma)R^{(2)}$ is always true, hence $G(T)$ is a monotonically decreasing function and is lower bounded by $\frac{R^{(1)}}{\gamma-1}$. See Eq. (\ref{lowerbound}) for detailed derivation.
    \begin{equation}\label{lowerbound}
        \lim_{T \to \infty}G(T)=\lim_{T \to \infty} R^{(1)}\frac{1-\gamma^T}{1-\gamma}+\gamma^TR^{(2)}=\frac{R^{(1)}}{1-\gamma}.  
    \end{equation}
    For policy $\pi$ and $\pi'$, the state value function in state $s_0$ is $v_\pi(s_0)$ and $v_{\pi'}(s_0)$, respectively.
    \begin{equation}
        v_\pi(s_0)= \mathbb{E}_\pi [\sum_{k=0}^{T}\gamma^{k}R_{k+1}|s_0]=\mathbb{E}_\pi[G(T)|s_0],
    \end{equation}
        \begin{equation}
        v_{\pi'}(s_0)= \mathbb{E}_{\pi'} [\sum_{k=0}^{T'}\gamma^{k}R_{k+1}|s_0]=\mathbb{E}_{\pi'}[G(T')|s_0].
    \end{equation}
    If policy $\pi$ is better than $\pi'$, for all $s\in S$, $v_\pi(s)>v_{\pi'}(s)$ is always true, so $v_\pi(s_0)>v_{\pi'}(s_0)$, i.e., $\mathbb{E}_\pi[G(T)|s_0]>\mathbb{E}_{\pi'}[G(T')|s_0]$. We assume that the expectation is calculated after one sampling using first-visit Monte Carlo sampling~\cite{sutton2018reinforcement}, hence $\mathbb{E}_\pi[G(T)|s_0]=G(T)$. $\mathbb{E}_\pi[G(T)|s_0]>\mathbb{E}_{\pi'}[G(T')|s_0]$ is equivalent to $G(T)>G(T')$. According to the decreasing monotonicity of $G(T)$, $T_i<T_j$. 
\end{proof}
\noindent Using Theorem \ref{the2}, we can get the following corollary.

\begin{corollary}\label{colo}
Let $\mathbb{T}=\{T_{i}|i=1,2,3,…,m\}$, where $T_i$ denotes the steps needed by $\pi_{i}$ to get the goal $\mathcal{G}$. If $T_j$ is the smallest in $\mathbb{T}$, then the policy $\pi_j \in \boldsymbol{\pi}^*$.
\end{corollary}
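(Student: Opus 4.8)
The plan is to run the argument of Theorem~\ref{the2} in reverse, exploiting the fact that the ``return versus step-count'' map $G(T)$ established there is not merely monotone but \emph{strictly} decreasing, hence injective and order-reversing. First I would recall the two facts that were actually proved inside Theorem~\ref{the2}: (i) under the goal-reaching reward structure of the task family $\mathcal{T}$ one has $G(T)=R^{(1)}\frac{1-\gamma^{T}}{1-\gamma}+\gamma^{T}R^{(2)}$ with $\frac{\mathrm{d}G}{\mathrm{d}T}<0$, so $T\mapsto G(T)$ is strictly decreasing and therefore injective; and (ii) under the single-sample first-visit Monte Carlo evaluation used there, $v_{\pi_i}(s_0)=G(T_i)$ for every policy $\pi_i$ that reaches $\mathcal{G}$ in $T_i$ steps.

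From (i) and (ii) the second step is immediate: since $T_j=\min_{1\le i\le m}T_i$, strict monotone decrease of $G$ gives $G(T_j)\ge G(T_i)$, i.e. $v_{\pi_j}(s_0)\ge v_{\pi_i}(s_0)$, for every $i\in\{1,\dots,m\}$. Hence $\pi_j$ attains the largest initial-state value among $\pi_1,\dots,\pi_m$.

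The third step is to turn ``largest value in the candidate set'' into ``$\pi_j\in\boldsymbol{\pi}^{*}$''. If the $\pi_i$ simply enumerate the agent's learned policies $\boldsymbol{\pi}$, step two already says $\pi_j$ maximizes $v_{\cdot}(s_0)$ over $\boldsymbol{\pi}$, which is exactly the defining property of membership in $\boldsymbol{\pi}^{*}$, and we are done. If instead $\boldsymbol{\pi}^{*}$ is read as the set of \emph{all} optimal policies, I would argue by contradiction: were $\pi_j\notin\boldsymbol{\pi}^{*}$, some optimal $\pi^{*}$ would satisfy $v_{\pi^{*}}(s_0)>v_{\pi_j}(s_0)$, and Theorem~\ref{the2} would then force $T_{\pi^{*}}<T_j$; since every goal-reaching policy has its value at $s_0$ pinned to $G(T_{\cdot})$, the globally smallest step count already yields the globally largest value, so no such $\pi^{*}$ can exist — contradiction.

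The main obstacle I anticipate is precisely this last step. Theorem~\ref{the2} is stated only in the forward direction (value gap $\Rightarrow$ step gap), so the proof must make explicit that the strict monotonicity (equivalently, injectivity) of $G$ supplies the converse implication, and it must be careful about the quantifier hidden in the word ``optimal'' — whether over the sampled candidates or over all policies — stating or inheriting from the paper's setup whichever reading is intended. Handling ties, where several policies share the minimal $T_j$ and are then all equally optimal, is a minor add-on; the remainder is just the monotonicity bookkeeping already carried out in Theorem~\ref{the2}.
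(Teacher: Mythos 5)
Your proposal is correct and follows essentially the same route the paper intends (its proof is only deferred to Appendix A.2): run Theorem~\ref{the2} in reverse, using the strict decrease of $G(T)$ and the identification $v_{\pi_i}(s_0)=G(T_i)$, so that the minimal $T_j$ yields the maximal initial-state value, and argue by contradiction that a strictly better policy would need strictly fewer steps. The quantifier worry you raise is resolved by the paper's own setup — $\boldsymbol{\pi}^*$ is defined as the optimal policies among the learned set $\boldsymbol{\pi}=\{\pi_1,\dots,\pi_m\}$ (``obtain the optimal policy among those policies''), so any hypothetical better $\pi^*$ already has its step count in $\mathbb{T}$ and directly contradicts the minimality of $T_j$, which is your first reading and the one to keep.
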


\begin{proof}
    %We leverage proof by contradiction to prove this corollary.
    %Let $T_j=min\mathbb{T}$ and $\mathbb{T}'=\{T_i|T_i>T_j,T_i\in \mathbb{T}\}$.
    
    %\noindent Condition $\rm P$: for any $T\in \mathbb{T}'$, $T_j<T$.
    
    %\noindent Conclusion $\rm Q$: $\pi_j \in \boldsymbol{\pi}^*$, that is for all $\pi \in \boldsymbol{\pi-\pi}^*$, $v_{\pi_j}(s_0)>v_\pi(s_0)$.

    %\noindent Consider $\neg \rm Q$: for $\pi_j \in \boldsymbol{\pi}^*$, there exits $\pi \in \boldsymbol{\pi-\pi}^*$, such that $v_{\pi_j}(s_0)\le v_\pi(s_0)$. When $v_{\pi_j}(s_0)\le v_\pi(s_0)$, $T_j\ge T$, which is contradicted against condition $\rm Q$. Therefore, the hypothesis is not valid and the original conclusion $\rm Q$ is true.
    See Appendix A.2.
\end{proof}
\noindent In $\mathcal{M}_k$, the policy in which the agent requires the fewest steps to get to the goal is the optimal policy $\pi_{\theta^*}^k$.

\subsection{Ideal Action Acquisition}
It is proved in Theorem \ref{the1} that the optimal policy has prior knowledge, and it is reasonable to use the state and action information in the optimal policy to obtain $\Tilde{\mathcal{A}}$. For the trajectory $\tau=\{s_0,a_0,r_1,s_1,a_1,r_2,\cdots,s_{T^*-1},a_{T^{*}-1},r_{T^*}\}$ of the optimal policy $\pi_{\theta^*}^k$ in $\mathcal{M}_k$, we first put all the actions in $\tau$ into a set $\hat{A}$. Then we calculate the frequency $f(a_i)$ of each action $a_i \in \tau$. The frequency of an action can be regarded as its weight. To ensure the weight is between 0 and 1, it is normalized by Eq. (\ref{weight}).
\begin{equation}\label{weight}
    w(a_i)=\frac{f(a_i)}{\mathop{\rm max}\limits_{a_i \in \tau}f(a_i) }.
\end{equation}
We finally design two different methods to obtain the $\Tilde{\mathcal{A}}$, as shown in Figure \ref{fig:generation}.
\begin{figure}[htb]
    \centering
    \includegraphics[width=0.9\linewidth,height=0.6\linewidth]{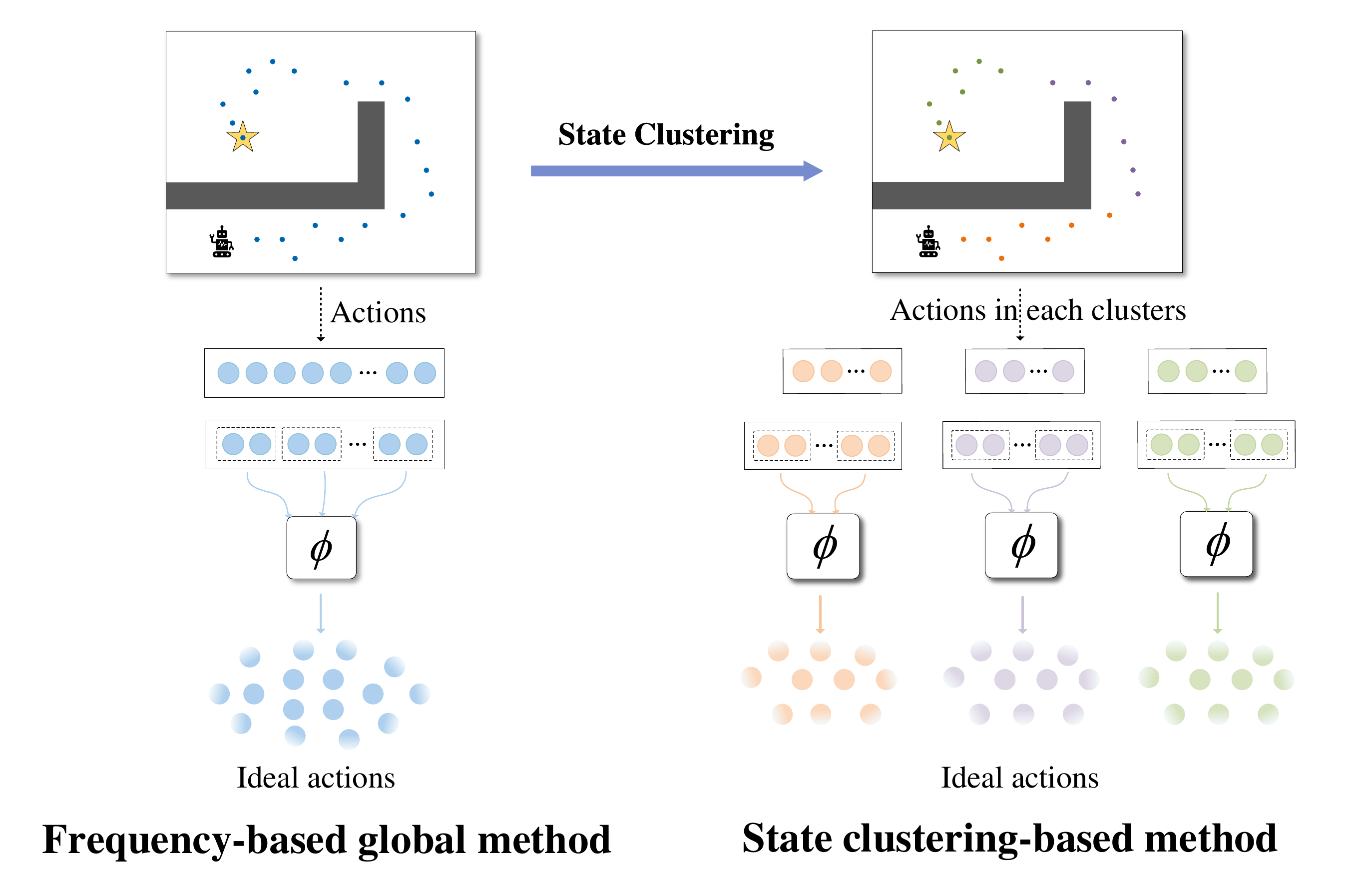}
    \caption{The framework for obtaining ideal actions.}
    \label{fig:generation}
\end{figure}

\subsubsection{Frequency-based Global Method (AP-FG)}
To explore effects of the AP method in which only action information is used to obtain the $\hat{A}$, we design the AP-FG. Two different actions $\hat{a}_i$ and $\hat{a}_j$ in $\hat{A}$ are taken to obtain one ideal action $\Tilde{a}$, according to the definition of $\Tilde{a}$ shown in the Eq. (\ref{ideal-actions}). There are total $C_{|\hat{A}|}^2$ ideal actions can be obtained in this setting.
\subsubsection{State Clustering-based Local Method (AP-SCL)}
In the AP-FG, only the action information in the optimal policy is used, ignoring the state information. In contrast, we consider both state and action information in the AP-SCL. It firstly uses clustering method to cluster $\hat{S}=\{s_0,s_1,\dots,s_{T^*-1}\}$ into $k$ clusters. Then, take the corresponding actions of the states in cluster $c_i$ together, and put them in $\hat{A}_i$. Finally, for each action set $\hat{A}_i$, AP-FG is used to obtain the ideal action subset $\Tilde{
\mathcal{A}}_i$, and the ultimate $\Tilde{\mathcal{A}}=\{\Tilde{
\mathcal{A}}_i|i=1,2,\dots,k\}$. Compare the design philosophy of AP-SCL with that of AP-FG, an advantage of AP-SCL is that it can effectively improve the training speed when $|\hat{A}|$ is large and the proof is as follows.
\begin{proof}
    In the AP-FG, the number of action pairs $(\hat{a}_i,\hat{a}_j)$ is $C_{|\hat{A}|}^2$, and the recursion required to compute $C_{|\hat{A}|}^2$ is
    \begin{equation}
    \resizebox{0.98\linewidth}{!}{$
        H(|\hat{A}|,2)=\left\{
        \begin{array}{l}
          1, \quad if \quad |\hat{A}|=2\\
         H(|\hat{A}|-1,2)+H(|\hat{A}|-1,1)+1,  otherwise
        \end{array}
        \right.. $}
    \end{equation}
    Let $ S(|\hat{A}|,2)=H(|\hat{A}|,2)+1$, then
        \begin{equation}
        S(|\hat{A}|,2)=\left\{
        \begin{array}{l}
          2, \quad if \quad |\hat{A}|=2\\
          S(|\hat{A}|-1,2)+S(|\hat{A}|-1,1),  otherwise
        \end{array}
        \right.. 
    \end{equation}
    Easy to know $S(|\hat{A}|,2)=2C_{|\hat{A}|}^2$, hence 
    \begin{equation}\label{timesfg}
        H(|\hat{A}|,2)=2C_{|\hat{A}|}^2-1.
    \end{equation}
    In the AP-SCL, $|\hat{A}|= {\textstyle \sum_{i=1}^{K}|\hat{A}_i|} $, the number of recursions required in cluster $c_i$ is $ H(|\hat{A}_i|,2)=2C_{|\hat{A}_i|}^2-1$. The total number of recursions is 
    \begin{equation}\label{timescl}
    \sum_{i=1}^{K}H(|\hat{A}_i|,2)=\sum_{i=1}^K(2C_{|\hat{A}_i|}^2-1).
    \end{equation}

    \noindent Expanding Eqs. (\ref{timesfg}) and (\ref{timescl}), we can get the following results,
     \begin{equation}
        \left\{\begin{array}{l}
             H(|\hat{A}|,2)={|\hat{A}|}^2-|\hat{A}|-1\\
             \sum_{i=1}^{K}H(|\hat{A}_i|,2)=\sum_{i=1}^K{|\hat{A}_i|}^2-|\hat{A}|-N
        \end{array}\right.,
    \end{equation}
    Because ${|\hat{A}|}^2={(\sum_{i=1}^K|\hat{A}_i|)}^2>\sum_{i=1}^K{|\hat{A}_i|}^2$, $H(|\hat{A}|,2)>\sum_{i=1}^KH(|\hat{A}_i|,2)$.
    
\end{proof}

We can draw a conclusion that the number of iterations required in the AP-SCL is less than that in the AP-FG. When $|\hat{A}|$ is large, the AP-SCL can effectively improve the training speed from the following aspects: 1) the iteration time of obtaining ideal actions; 2) the calculation time of $V(a)$. After obtaining the $\Tilde{\mathcal{A}}$, we can select the valuable actions by solving the objective function.
\section{Experiment and Results}\label{sec5}
In this section, we compare the performance of our AP methods to baselines in two simulated but challenging environments.
\subsection{Environments}
\subsubsection{Maze Domain}
This is a maze environment with continuous states and the state space is comprised of the coordinates of the agent’s location. The agent needs to reach the goal as soon as possible while avoiding obstacles. In order to encourage the agent to reach the goal, the agent will receive a penalty of -0.05 for each step, and get a reward of 100 after reaching the goal. Additional environment details are available in Appendix B.1.
\subsubsection{Modified Frozen Lake}

In this environment, the agent needs to reach the destination from the initial position as soon as possible without falling into ice holes. Once it falls into the hole, the agent will go back to the initial position and start exploring again. The state of the environment is changed from the original discrete state to the continuous state. The action is represented by the coordinate $(x,y)$, and each action reflects the direction and the length of the movement. In order to better simulate the slippery nature of the frozen lake, we add random disturbance to simulate the uncertainty of the action every time the agent takes the action. More details are provided in Appendix B.1. 

\subsection{Baselines and Evaluation Metrics}
To empirically demonstrate the effectiveness of our methods, three baselines are used in the experiments.
    \begin{itemize}[leftmargin=*]
       \item LAICA(AC): The method proposed in~\cite{chandak2020lifelong}, using Actor-Critic algorithm to optimize policy. New actions are all added into action space instead of selecting valuable actions.
        \item LAICA(DPG): A variation of LAICA(AC), using Deterministic Policy Gradient (DPG) algorithm to optimize policy.
        \item Random Selection Method: Given that our methods leverage action pick-up technology, the size of action space in AP is smaller than that in LAICA. To avoid fast convergence caused by a small size of action space, we set this random selection method as a comparison. It randomly selects several actions from a set of new actions, and the number of selected actions is consistent with that in AP.
    \end{itemize}
We use the total expected reward curve during learning process to measure the performance of different algorithms. Naturally, the reward is prone to be fluctuate, especially at the initial episodes. For the sake of observation and comparison, the curve is smoothed by Exponential Moving Average (EMA).
\vspace{5pt}
\subsection{Experiment Setup}\vspace{5pt}
\subsubsection{General Setting}\vspace{5pt}
The general setting of LAICA is consistent with that in \cite{chandak2020lifelong}, apart from a few hyper-parameters. Our AP methods is divided into two phases. In the initial phase $\mathcal{M}_0$, prior optimal policy is unavailable, so we cannot select valuable actions. Since the agents in the two environments are controlled by actuators, the outcome of an action is associated with the selected actuators. Turning on different actuators can result in the same outcome, hence we only remove repetitive actions from action space $A_0$ in the $\mathcal{M}_0$. After $\mathcal{M}_0$, a large number of policies can be obtained. Based on Corollary \ref{colo}, the policy with the fewest steps is the optimal policy. Combined with the optimal policy available, we apply our AP methods to select valuable actions from a set of new actions and add them to the old action space.

\subsubsection{Action Pick-up Setting}\vspace{5pt}
Our two simulated environments can be abstracted as navigation problems, wherein the function $\phi$ is designed as follows:
\begin{equation}
    \phi((\hat{a}_i,\hat{a}_j),(w(\hat{a}_i),w(\hat{a}_j)))=w(\hat{a}_i)\times \hat{a}_i+w(\hat{a}_j) \times \hat{a}_j.
\end{equation}
Considering that the action is represented by two-dimensional coordinates, the similarity measure function $sim$ is designed as follows:
\begin{equation}
    sim(a,\tilde{a})=\frac{1}{1+\sqrt{(x_a-x_{\tilde{a}})^{2}+(y_a-y_{\tilde{a}})^2}}.
\end{equation}
In the actual experiments, instead of selecting only one action with the highest value, we design an action pick-up rule: calculate the average value $\overline{V(a)}$ of all actions in $\mathcal{A}$, and actions with higher value than $\overline{V(a)}$ are added to the $\mathcal{A}^*$. Both environments use this rule to select valuable actions. In the AP-SCL, K-means is used to cluster the states on the optimal policy.

\subsubsection{Hyper-parameters Setting}
The hyper-parameters settings in the experiments are shown in Table \ref{table:hyper}. Apart from the hyper-parameters listed below, others are consistent with those in \cite{chandak2020lifelong}. The hyper-parameters settings apply to baselines and our AP methods. When training the action representation function, the convergence condition in our methods is more relaxed compared with baselines. Full details are available in Appendix B.2.

\begin{table}[!thb]
\scriptsize
\centering
\renewcommand{\arraystretch}{1.5}
%\vspace{-5pt}
\setlength{\tabcolsep}{2mm}{
\begin{tabular}{cccc}
\toprule
\multirow{2}{*}{Environments} & \multicolumn{3}{c}{Hyper-parameters}                   \\ \cline{2-4} 
& Episodes & Change times\tablefootnote{The number of changes in action space.} & Number of state-action pairs\tablefootnote{State-action pairs are used as samples to train action representation function after every change in action space.} \\ \cline{1-4} 
Maze(AC)& 10000&4&10000\\
Maze(DPG) & 15000  & 4            & 50000  \\
Frozen Lake\tablefootnote{The settings in Frozen Lake(AC) and Frozen Lake(DPG) are the same.} & 10000& 4  & 10000\\ 
\bottomrule
\end{tabular}}
\caption{Hyper-parameters settings in different environments.}
\label{table:hyper}
\end{table}

\subsection{Experimental Results}
\begin{figure*}[!hbt]
        \setlength{\belowcaptionskip}{-5pt}
	\centering  %图片全局居中
	\subfigbottomskip=-3pt %两行子图之间的行间距
	\subfigcapskip=-2pt %设置子图与子标题之间的距离
	\subfigure[]{
		\includegraphics[width=0.41\linewidth]{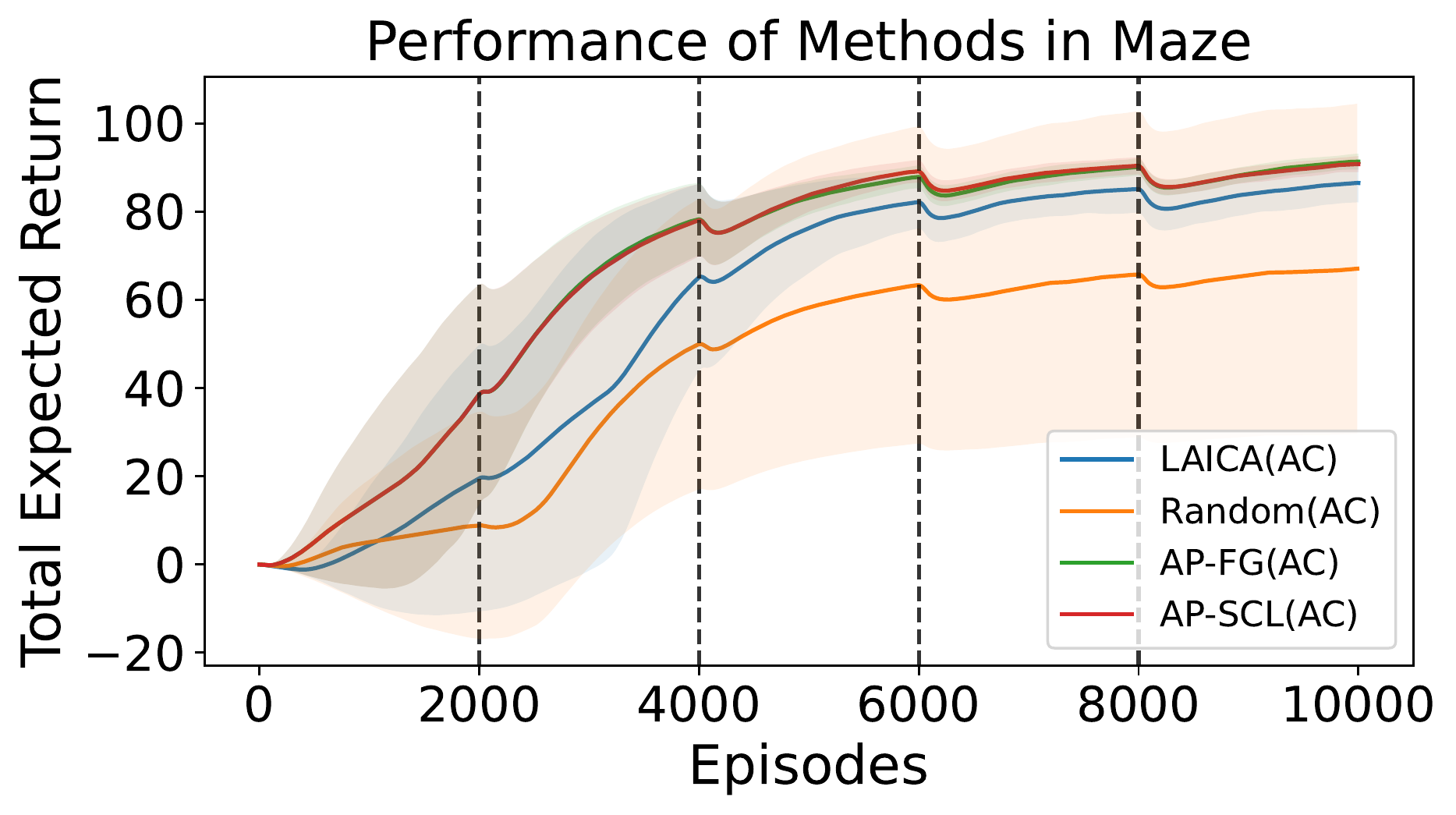}}
        \hspace{1pt}
	\subfigure[]{
		\includegraphics[width=0.41\linewidth]{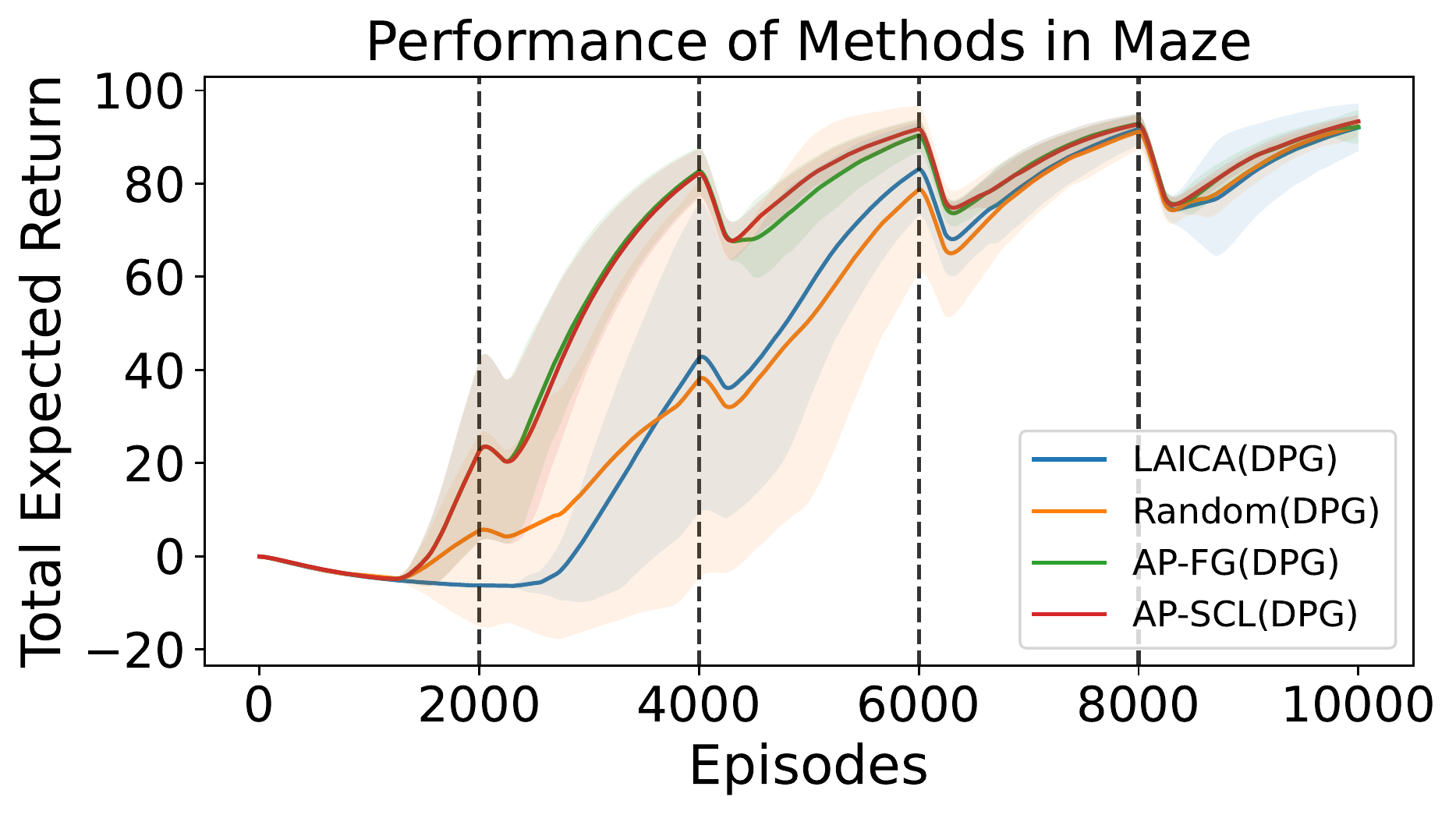}}
	  \\
	\subfigure[]{
		\includegraphics[width=0.41\linewidth]{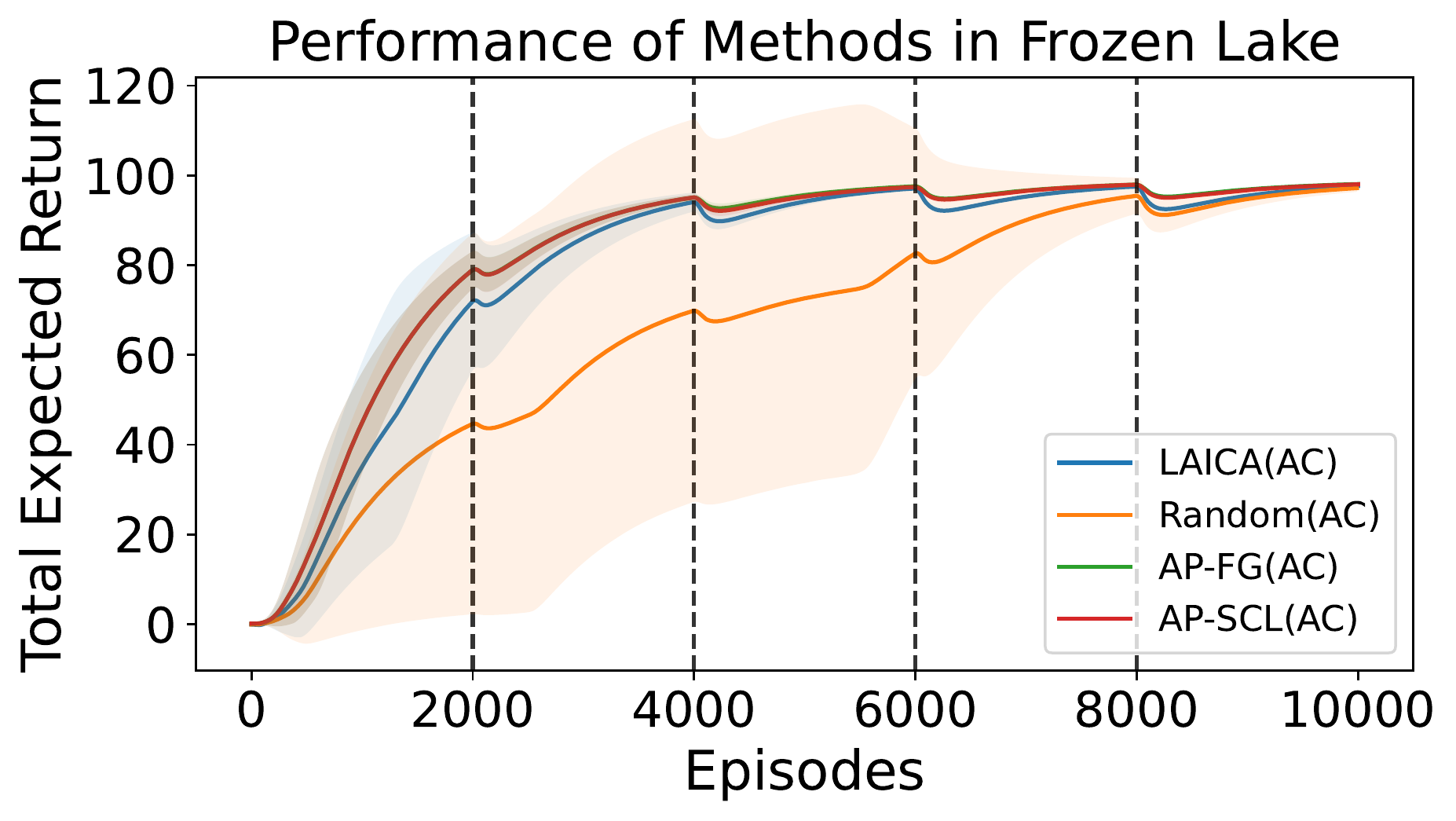}}
	\hspace{1pt}
	\subfigure[]{
		\includegraphics[width=0.41\linewidth]{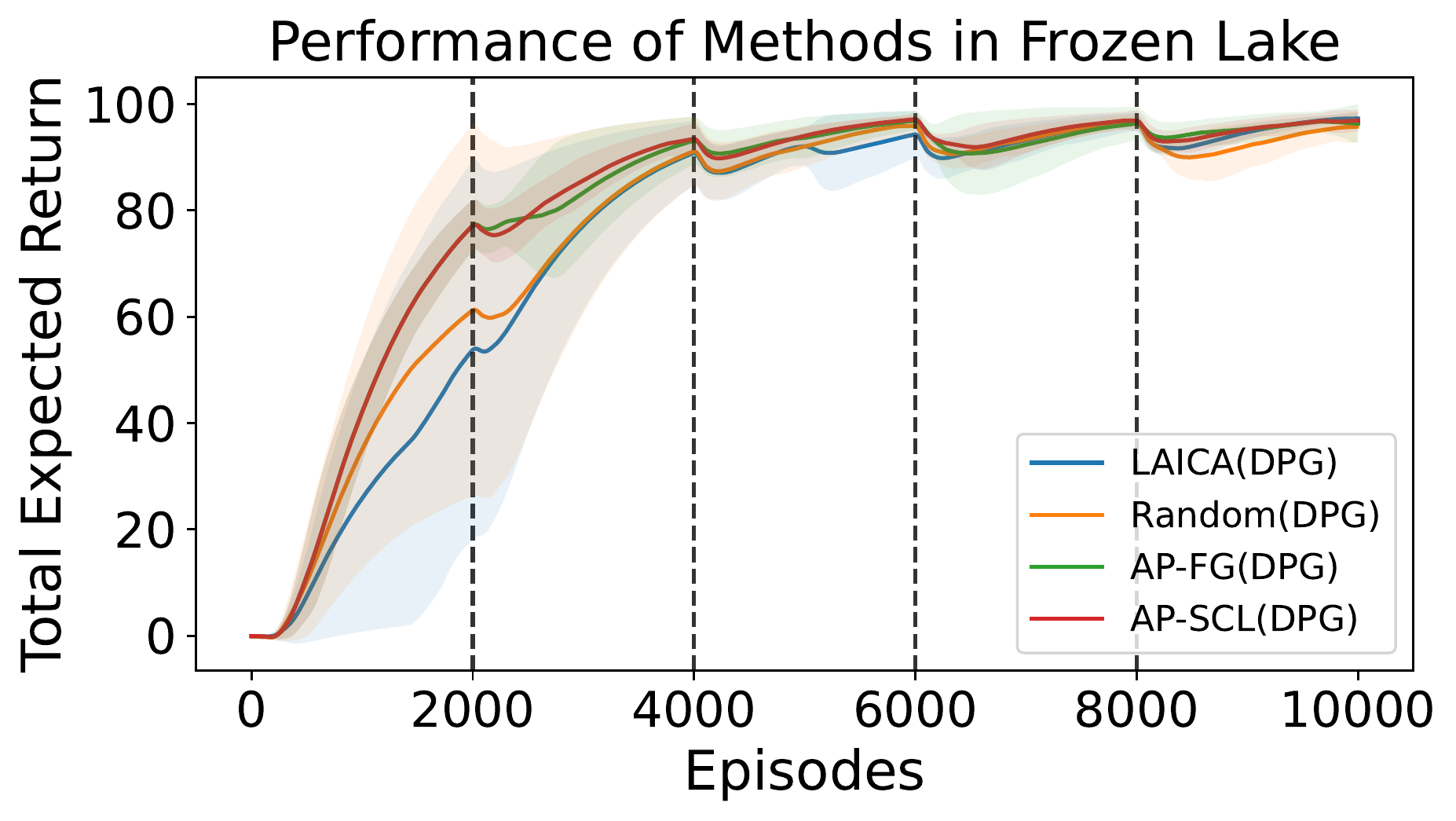}}
	\caption{Performance between baselines and our AP in the Maze and Frozen Lake environments. For each configuration, we perform five random trails in the best hyper-parameters settings. The learning curves correspond to the average results of each method running with five different seeds. The shaded regions correspond to standard error obtained using five trials. The vertical dotted bars indicate entering the next MDP, at which point the action space changes. In the AP-SCL, $k=3$.}

        \label{fig:all}
\end{figure*}

\begin{figure}[!hbt]
        \setlength{\belowcaptionskip}{-5pt}
	\centering  %图片全局居中
	\subfigbottomskip=1pt %两行子图之间的行间距
	\subfigcapskip=-4pt %设置子图与子标题之间的距离
	\subfigure[]{
		\includegraphics[width=0.49\linewidth]{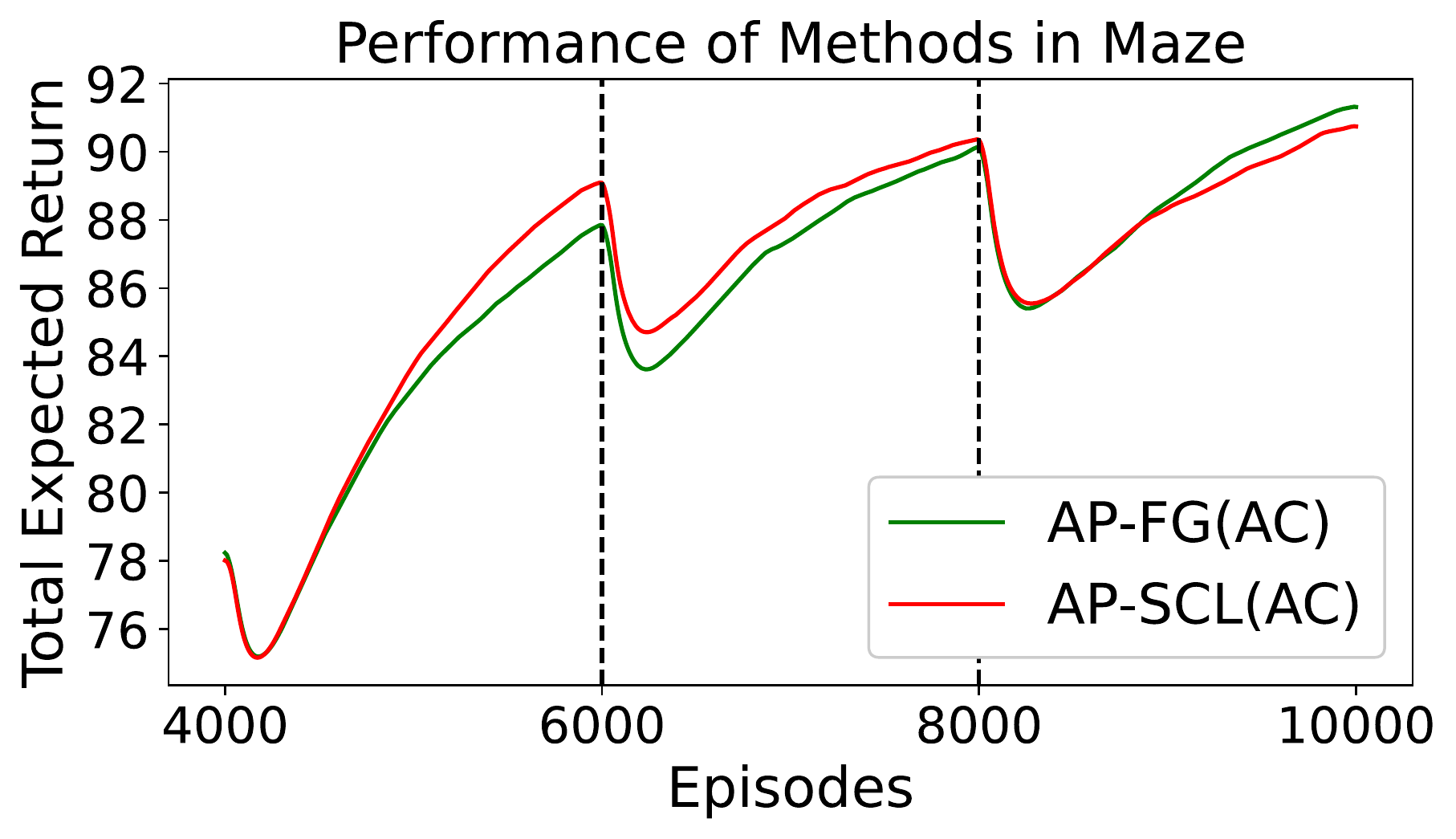}}
        \hspace{-6pt}
	\subfigure[]{
		\includegraphics[width=0.49\linewidth]{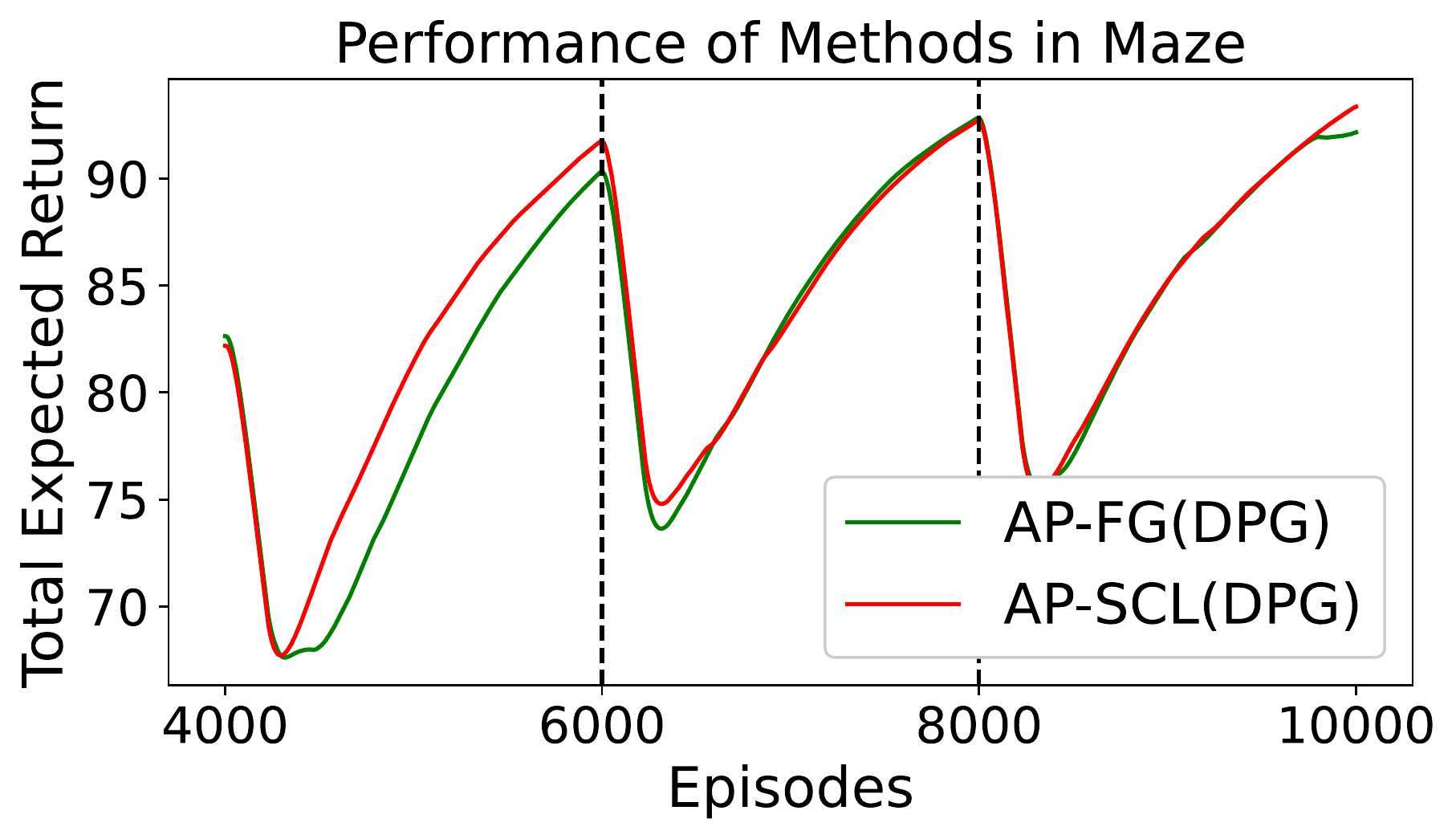}}
	  \\
	\subfigure[]{
		\includegraphics[width=0.49\linewidth]{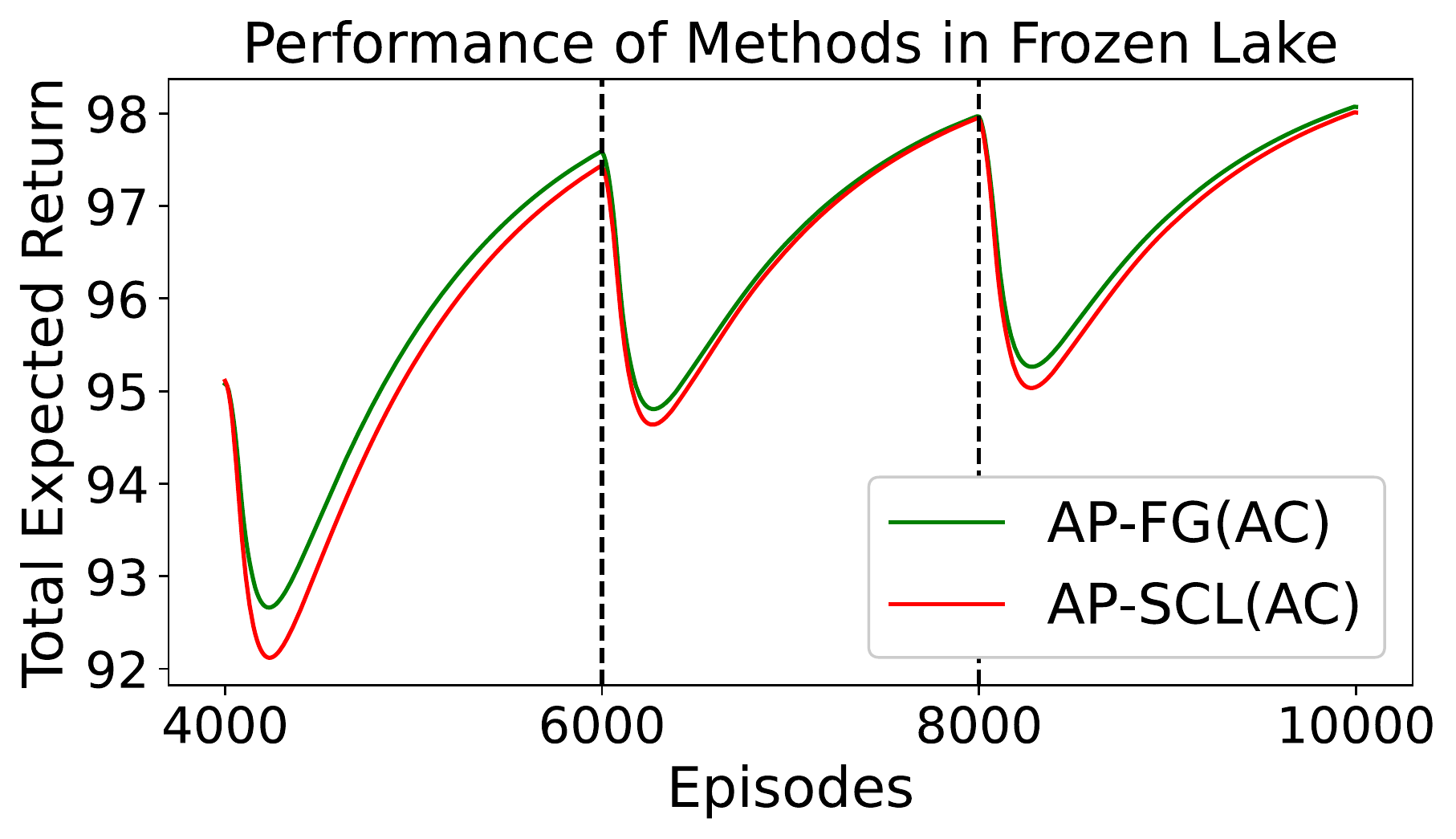}}
	\hspace{-6pt}
	\subfigure[]{
		\includegraphics[width=0.49\linewidth]{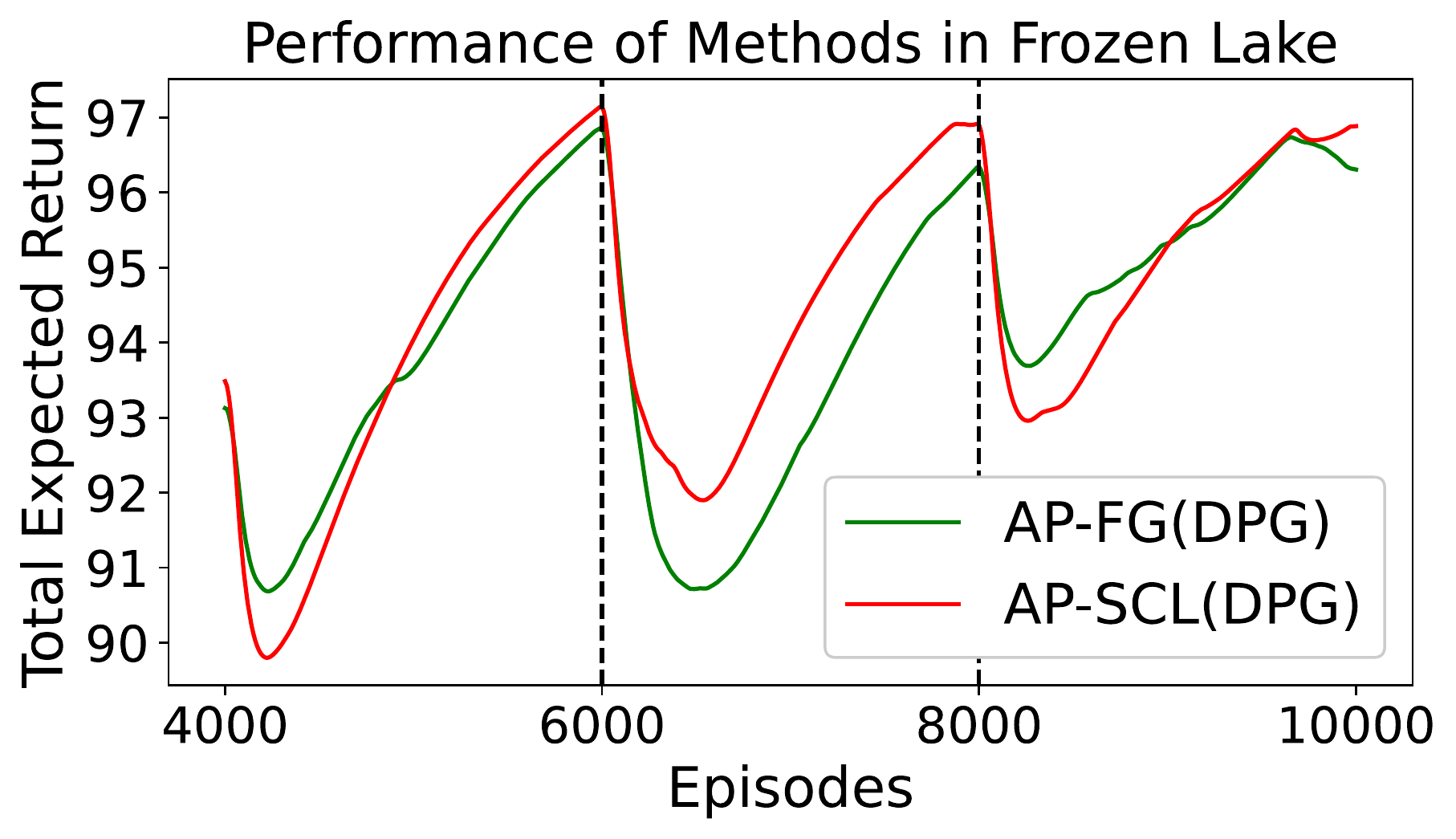}}
	\caption{Partial performance between AP-FG and AP-SCL in the Maze and Frozen Lake environments. The curves correspond to the reward at 4000\~{}10000 episodes.}
        \label{fig:local}
\end{figure}

Comparison results of different methods on the two environments are shown in Figure \ref{fig:all}. In general, the performance of AP methods surpass that of LAICA, showing that AP methods can improve the RL learning efficiency and the performance are more stable. The advantage of our AP methods in the initial phase $\mathcal{M}_0$ due to the removal of repetitive actions. In the later phases of training, the curves of LAICA and our methods are very close. The reason is that, as training episode increases, both LAICA and AP gradually find the optimal policy and the performance reaches saturation.

\begin{figure}[!hbt]
	\centering  %图片全局居中
        \setlength{\belowcaptionskip}{-8pt}
	\subfigbottomskip=-3pt %两行子图之间的行间距
	\subfigcapskip=-4pt %设置子图与子标题之间的距离
	\subfigure[]{
		\includegraphics[width=0.49\linewidth]{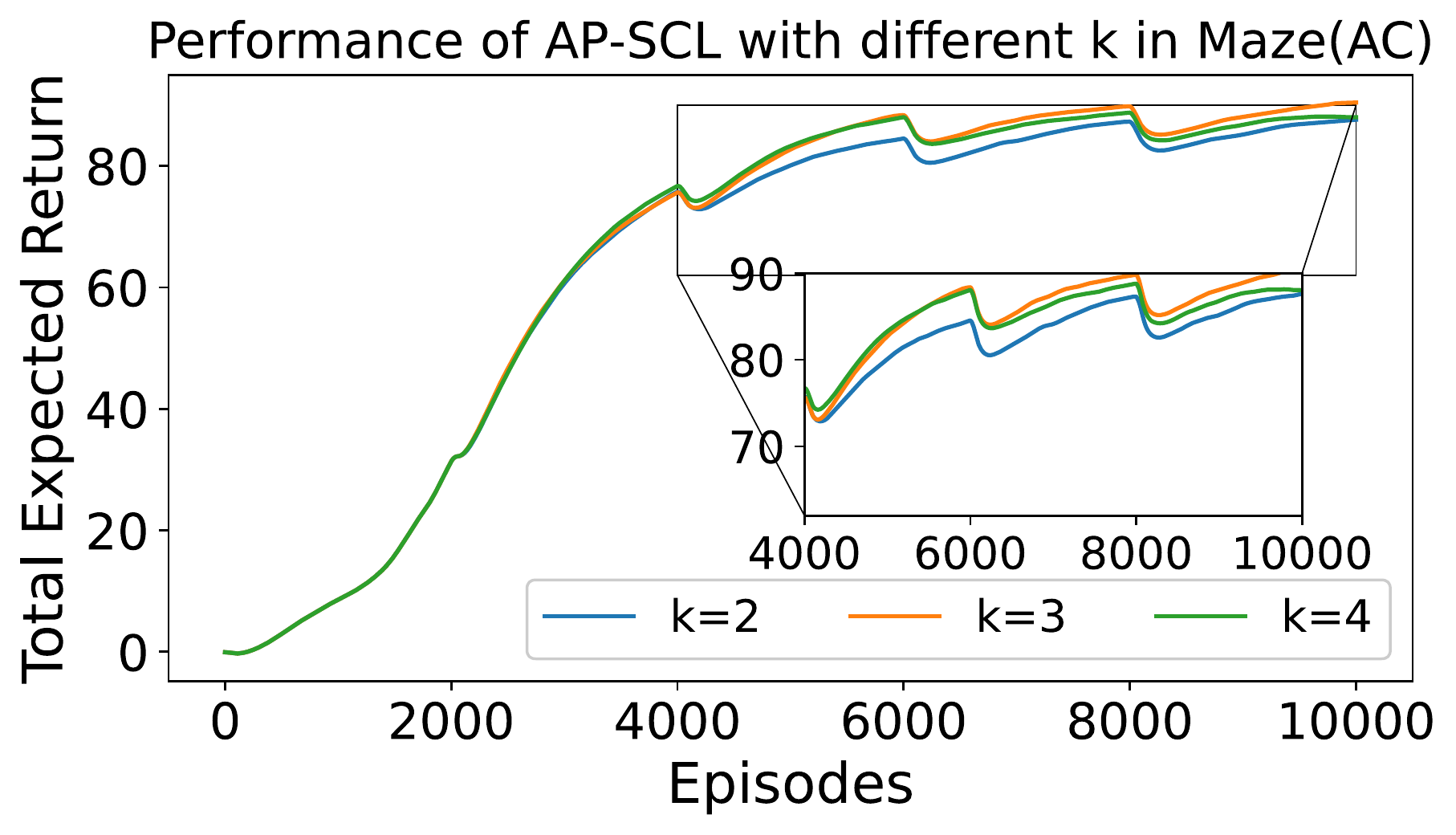}}
        \hspace{-6pt}
	\subfigure[]{
		\includegraphics[width=0.49\linewidth]{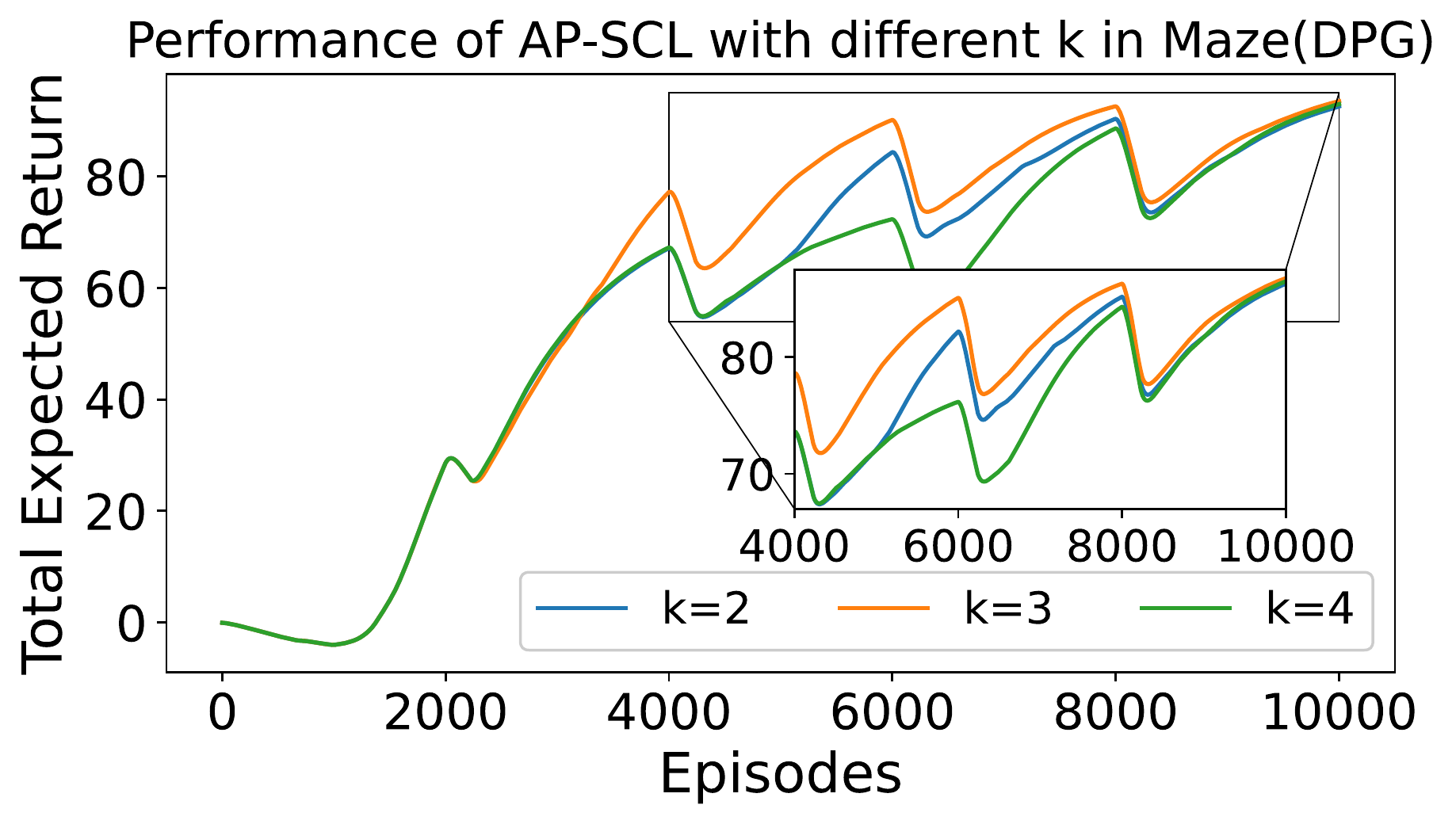}}
	  \\
         \hspace{-6pt}
	\subfigure[]{
		\includegraphics[width=0.49\linewidth]{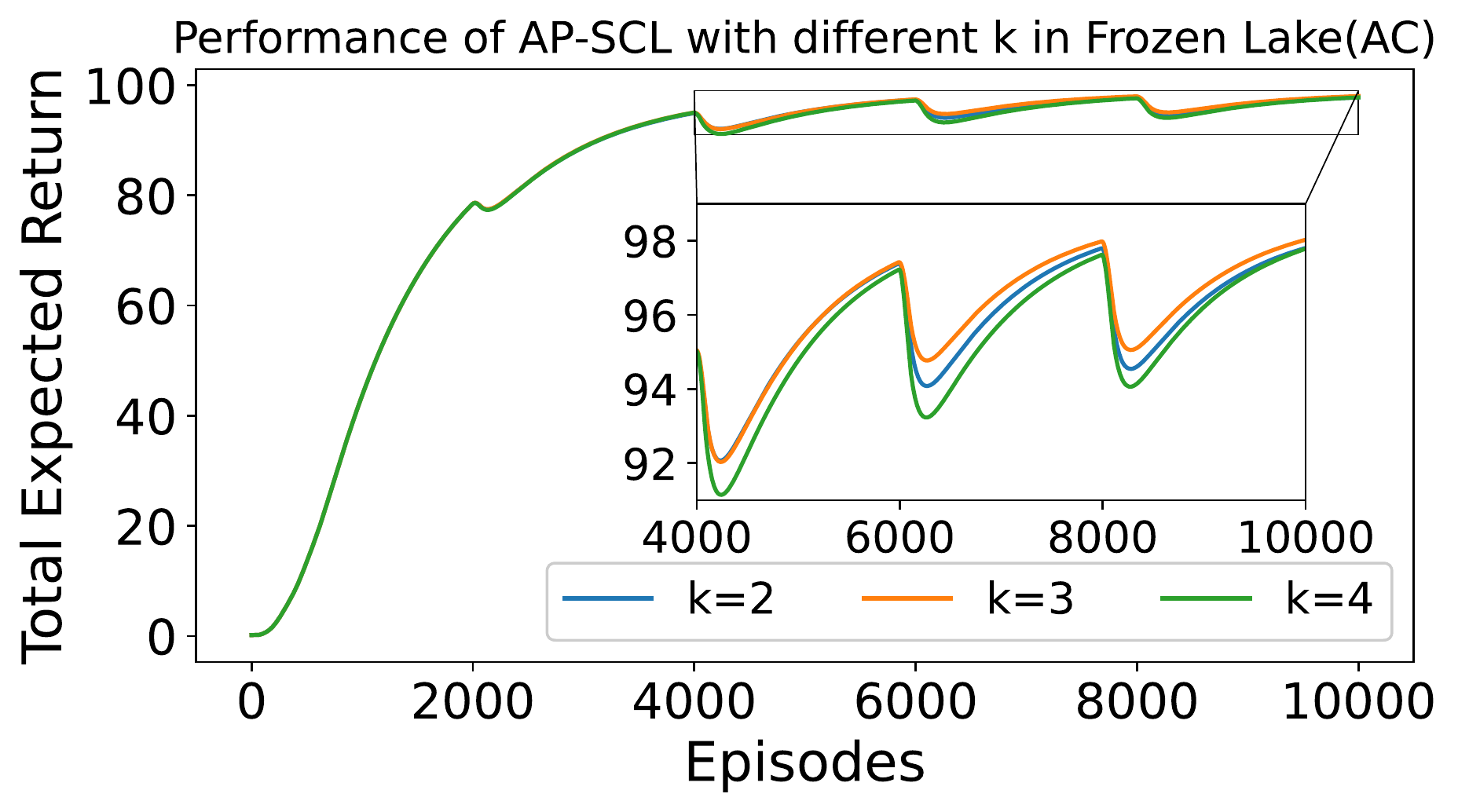}}
	%\quad
	\subfigure[]{
		\includegraphics[width=0.49\linewidth]{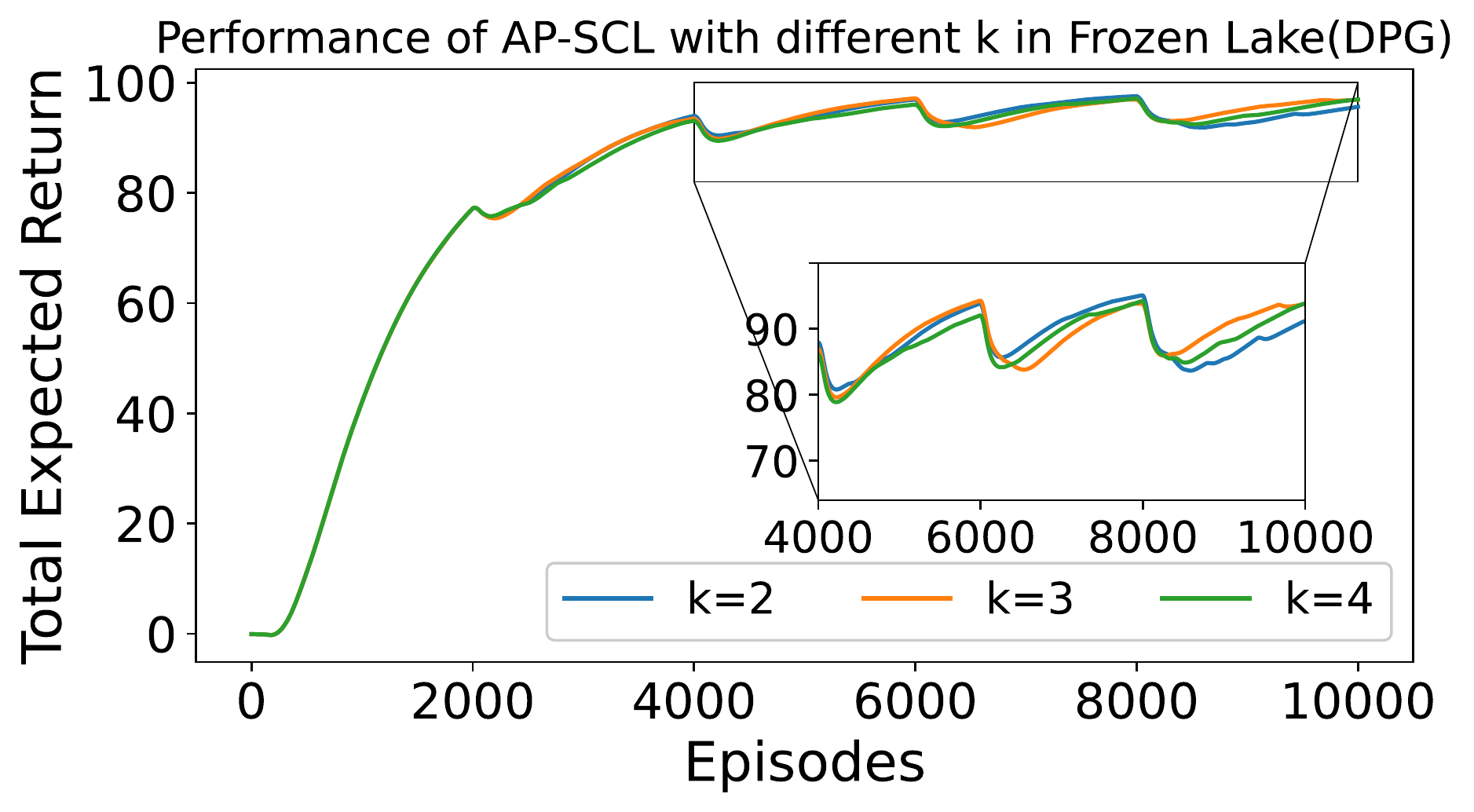}}
	\caption{Performance of AP-SCL method with different $k$ values.}
        \label{fig:kmeans}
\end{figure}

The reason why Random Select method has poor performance is that it randomly selects some actions from new actions, and there is a certain probability that it will omit valuable actions. Adding all the new actions into the original action space in LAICA sacrifices the RL learning efficiency, but such a method avoids neglecting valuable actions and thus guarantees the learning performance. In the AP methods, the learning efficiency is improved while the size of action space is the same as that in Random Select method, which indicates that the actions selected are of great value, and verifies the effectiveness of our AP.

As can be seen from Figure \ref{fig:local}, the AP-SCL has some advantages over the AP-FG in Figures \ref{fig:local}(a), (b) and (d). Table \ref{time} shows the total time required by the AP-FG and the AP-SCL in the action pick-up process. It is obvious that the AP-SCL has advantage over the AP-FG in terms of training time, which also verifies the theoretical analysis in section \ref{sec4.2}. In general, AP-SCL considering both state and action information in the optimal policy is better than the AP-FG.

\begin{table}[htb]
\scriptsize
\renewcommand{\arraystretch}{1.5}
\center
\setlength{\tabcolsep}{5mm}{
\begin{tabular}{cccc}
\toprule
\multirow{2}{*}{RL algorithms} & \multirow{2}{*}{Methods} & \multicolumn{2}{c}{Environments} \\ \cline{3-4} 
  &  & Maze              & Frozen Lake \\ \hline
\multirow{2}{*}{AC}           & AP-FG                   & 813.048           & 121.551     \\
    & AP-SCL     & \textbf{347.561}  & \textbf{49.613}      \\
\multirow{2}{*}{DPG}          & AP-FG                   & 421.694           & 69.634      \\
                              & AP-SCL                  &\textbf{204.072}           & \textbf{48.840}      \\ \bottomrule
\end{tabular}}
\caption{Time consumption\tablefootnote{The unit is millisecond.} 
 during the action pick-up process.}
 \label{time}
\end{table}

In the AP-SCL, we let $k$ equals to 2, 3, and 4 and carry out experiments respectively. The experimental results are shown in Figure \ref{fig:kmeans}. When $k$ equals to 3, the performance is slightly better than others, but there is no obvious advantage.

\section{Conclusion}\label{sec6}
To the best of our knowledge, this is the first work to address the problem of improving the leaning efficiency in DAS-RL. We develop an intelligent action pick-up algorithm to automatically select valuable actions from a set of new actions to improve the RL learning efficiency. In our AP, the prior optimal policy contains useful state and action information, which play an important role in the action pick-up process. Based on the prior optimal policy, we propose two different AP methods: frequency-based global method and state clustering-based local method. Superior performance on Maze and Frozen Lake environments demonstrate that our AP can effectively accelerate the convergence speed of RL.

In future work, we intend to apply this algorithm to a wider range of real-world control scenarios. For some complex realistic scenarios, such as recommender systems, we will directly leverage neural networks to predict the value of each action and select valuable actions instead of solving the objective function.

%% The file named.bst is a bibliography style file for BibTeX 0.99c

\bibliographystyle{named}
\bibliography{ijcai23}

\end{document}